\documentclass[lettersize,journal,colorlinks=true,linkcolor=blue,citecolor=blue,urlcolor=blue]{IEEEtran}
\usepackage{amsmath,amsfonts,amssymb}
\usepackage{amsthm}
\usepackage{orcidlink}
\newtheorem{theorem}{Theorem}

\newtheorem{proposition}{Proposition}
\newtheorem{remark}{Remark}
\usepackage{algorithmic}
\usepackage{algorithm}
\usepackage{array}
\usepackage[caption=false,font=normalsize,labelfont=sf,textfont=sf]{subfig}
\usepackage{textcomp}
\usepackage{stfloats}
\usepackage{url}
\usepackage{graphicx}
\usepackage{cite}
\usepackage{booktabs}
\usepackage{bm}
\usepackage{color}
\usepackage{enumitem}
\usepackage[utf8]{inputenc}
\usepackage{newunicodechar}
\newunicodechar{̣}{}
\usepackage{tikz}
\usetikzlibrary{arrows.meta, positioning, shapes.geometric, fit, patterns, calc, decorations.pathreplacing}

\newcommand{\xhat}{\hat{\bm{x}}}

\newcommand{\ztrue}{\bm{z}^*}
\newcommand{\xslow}{\bm{x}_{\text{s}}}
\newcommand{\xfast}{\bm{x}_{\text{f}}}

\newcommand{\dotxslow}{\dot{\bm{x}}_{\text{s}}}
\newcommand{\xshat}{\hat{\bm{x}}_{\text{s}}}
\newcommand{\R}{\mathbb{R}}

\begin{document}

\title{Physics-Guided Dimension Reduction for Simulation-Free Operator Learning of Stiff Differential--Algebraic Systems}

\author{Huy~Hoang~Le\,\orcidlink{0009-0009-2138-5468},~\IEEEmembership{}
        Haoguang~Wang\,\orcidlink{0009-0005-4101-864X},~\IEEEmembership{}
        Christian~Moya\,\orcidlink{0000-0003-0180-9285},~\IEEEmembership{}
        Marcos~Netto\,\orcidlink{0000-0001-7002-3345},~\IEEEmembership{Senior Member,~IEEE,}
        and~Guang~Lin*\,\orcidlink{0000-0002-0976-1987}~\IEEEmembership{}%
\thanks{Manuscript received XX, 2026.}%
\thanks{This work is supported in part by the National Science Foundation under Grant 2328241. G. Lin would like to thank the support of National Science Foundation (DMS-2533878, DMS-2053746, DMS-2134209, ECCS-2328241, CBET-2347401 and OAC-2311848), and U.S.~Department of Energy (DOE) Office of Science Advanced Scientific Computing Research program DE-SC0023161, the SciDAC LEADS Institute, and DOE–Fusion Energy Science, under grant number: DE-SC0024583.}%
\thanks{H.H.~Le, H.~Wang, and G.~Lin are with the School of Mechanical Engineering,
  Purdue University, West Lafayette, IN~47907~USA
  (e-mail: lehuyhoang02012002@gmail.com; wang6411@purdue.edu; guanglin@purdue.edu).}%
\thanks{C.~Moya and G.~Lin are with the Department of Mathematics,
  Purdue University, West Lafayette, IN~47907~USA
  (e-mail: cmoyacal@purdue.edu).}%
\thanks{M.~Netto is with the Department of Electrical and Computer Engineering,
  New Jersey Institute of Technology, Newark, NJ~07102~USA
  (e-mail: marcos.netto@njit.edu).}%
\thanks{*Corresponding author: G.~Lin.}}

\maketitle

% ══════════════════════════════════════════
\begin{abstract}
% ══════════════════════════════════════════
Neural surrogates for stiff differential--algebraic equations (DAEs) face two barriers: soft-constraint methods leave algebraic residuals that stiffness amplifies into errors, and hard-constraint methods require trajectory data from stiff integrators. We introduce an \textbf{extended Newton implicit layer} that enforces algebraic constraints exactly and reduces fast dynamics to their quasi-steady-state values in a single differentiable solve. Embedded in a physics-informed DeepONet, the layer recovers all fast and algebraic states exactly from slow-state predictions, removes the per-window stiffness-amplification pathway, and yields a stiffness-scaled Implicit Function Theorem gradient absent from penalty methods. Cascaded implicit layers extend this to multi-component systems with provable convergence. On a grid-forming inverter (stiffness ratio $\approx 4{,}712$), extended Newton attains 1.42\% error vs.\ 39.3\% (penalty) and 57.0\% (standard Newton); augmented Lagrangian and feedback linearization diverged. Two independently trained models compose without retraining (0.72\%--1.16\% error, exact constraint satisfaction). Cross-domain validation on the Robertson stiff DAE (stiffness ratio up to $10^{5}$) confirms generalization. Conformal prediction provides 90\% coverage with automatic out-of-distribution detection.
\end{abstract}

\begin{IEEEkeywords}
Differential--algebraic equations, physics-informed neural networks, operator learning, implicit layers, singular perturbation.
\end{IEEEkeywords}

\section*{Acronyms}
\begin{description}[labelwidth=0.9cm,leftmargin=1.1cm,nosep,font=\normalfont]
\item[AL] Augmented Lagrangian
\item[CP] Conformal prediction
\item[DAE] Differential-algebraic equation
\item[FL] Feedback linearization
\item[GFM] Grid-forming (inverter)
\item[IFT] Implicit Function Theorem
\item[KKT] Karush--Kuhn--Tucker
\item[MLP] Multi-layer perceptron
\item[ODE] Ordinary differential equation
\item[OOD] Out-of-distribution
\item[PCC] Point of common coupling
\item[PINN] Physics-informed neural network
\item[QSS] Quasi-steady-state
\item[SMIB] Single-machine infinite bus
\item[UQ] Uncertainty quantification
\end{description}

% ══════════════════════════════════════════
% ══════════════════════════════════════════
\section{Introduction}
% ══════════════════════════════════════════

\IEEEPARstart{S}{emi}--explicit index-1 differential-algebraic equations (DAEs),
\begin{align}
    \dot{\bm{x}} &= \bm{f}(\bm{x}, \bm{z}, \bm{\mu}), \quad \bm{x} \in \R^{n_x} \label{eq:dae_ode}\\
    \bm{0} &= \bm{g}(\bm{x}, \bm{z}, \bm{\mu}), \quad \bm{z} \in \R^{n_z} \label{eq:dae_alg}
\end{align}
where $\bm{x} \in \R^{n_x}$, $\bm{z} \in \R^{n_z}$, and $\bm{\mu}$ denote, respectively, states, algebraic variables, and parameters, model systems such as power grids~\cite{stiasny2024pinnsim}, chemical reaction networks~\cite{brenan1996dae}, and multi-body mechanical systems~\cite{hairer1996ode2}. Neural surrogates approximate the solution operator $(\bm{x}_0, \bm{\mu}) \mapsto \bm{x}(t)$ orders of magnitude faster than implicit integrators, but on stiff systems they face a central challenge: satisfying algebraic constraints without trajectory data.

\subsection{The Data--Constraint Dilemma}
\label{sec:dilemma}

Learning DAE surrogates forces a trade-off between data efficiency and constraint satisfaction:

\textbf{Path~A: Soft constraints, no data.} Physics-informed neural networks (PINNs) penalize the algebraic residual $\|\bm{g}\|^2$ during training (e.g., DAE-PINN~\cite{moya2023daepinn}, augmented-Lagrangian PINNs (AL-PINNs)~\cite{son2023alpinn}). Although this avoids simulation data, it empirically leaves $\|\bm{g}\| \sim 10^{-2}$ to $10^{-5}$. In stiff systems, these small residuals amplify into large derivative errors, degrading convergence for fast states.

\textbf{Path~B: Hard constraints, need data.} Newton/Karush--Kuhn--Tucker (KKT) layers enforce $\bm{g}=\bm{0}$ exactly~\cite{donti2021dc3, pal2025pnodes, golder2025daehardnet}, but require trajectory data from stiff integrators, the very bottleneck surrogates aim to bypass.

\textbf{Path~C: Hard constraints, data-free, scalable.} Embedding a Newton solver within a physics-informed operator network combines the strengths of A and B. The challenge is scalability: the monolithic Newton system grows cubically with the number of components. We resolve this with cascaded implicit layers (Section~\ref{sec:cascaded}), recovering linear scaling.

\subsection{The Dimension Reduction Opportunity}
\label{sec:stiff-aware_problem}

Resolving the data--constraint dilemma alone leaves physical structure unexploited. Neural operator surrogates predict trajectories one window at a time, yet not all $n_x$ states are irreducible. For stiff DAEs, the states partition as
\begin{equation}
    \bm{x} = (\xslow, \xfast), \quad \xslow \in \R^{n_\text{s}},\; \xfast \in \R^{n_\text{f}},\; n_\text{s} + n_\text{f} = n_x
    \label{eq:decomp}
\end{equation}
where $\xfast$ operates on timescales $\tau_\text{f} \ll T_w$ and $\xslow$ has $\tau_\text{s} \gg \tau_\text{f}$. Fast states (e.g., converter currents) settle to quasi-steady-state (QSS) within each window, satisfying $\bm{f}_\text{fast} \approx \bm{0}$ at the boundary; this gives Newton additional equations analogous to $\bm{g} = \bm{0}$. Predicting fast states forces the network to resolve $O(1/\kappa)$ oscillations unnecessarily. Instead, the network predicts only the slow states and Newton recovers the rest to machine precision.

\subsection{Contributions}

We propose a unified framework in which a Newton solver resolves all algebraic and quasi-steady-state constraints, leaving the network to learn only the irreducible slow dynamics. To our knowledge, this is the first approach to simultaneously achieve hard constraint enforcement, simulation-free training, and stiffness-aware dimension reduction for DAEs.
\begin{enumerate}[label=(\roman*)]
    \item \textbf{Extended Newton implicit layer with singular perturbation reduction.} A single differentiable layer embedded in a PI-DeepONet~\cite{wang2021pideepont} unifies algebraic constraint enforcement ($\bm{g}=\bm{0}$) and quasi-steady-state reduction ($\bm{f}_\text{fast} \approx \bm{0}$), solving the combined system $[\bm{f}_\text{fast};\, \bm{g}] = \bm{0}$ for all fast and algebraic states from slow-state predictions alone. Three consequences follow: (a)~the network output dimension reduces from $n_x$ to $n_\text{s}$; (b)~the per-window $\kappa$-amplification pathway is removed (Proposition~\ref{prop:rhs_contamination}); and (c)~the Implicit Function Theorem (IFT) gradients capture a $\kappa$-scaled coupling term structurally absent from penalty methods. No trajectory data is required.
    \item \textbf{Cascaded implicit layers for multi-component scalability.} For systems with hierarchical algebraic structure (triangular local Jacobians, sparse network coupling), the extended Newton system decomposes into parallel local solutions plus a small network-level Newton step, with provable linear convergence (Theorem~\ref{thm:convergence}).
\end{enumerate}

Table~\ref{tab:comparison} positions our method among existing DAE surrogate approaches.

\begin{table*}[t]
\centering
\caption{Comparison of DAE surrogate approaches. ``Stiff-aware'' indicates whether the method reduces the network's output dimension by exploiting timescale separation.}
\label{tab:comparison}
\small
\begin{tabular}{@{}lccccccc@{}}
\toprule
 & Exact & Zero & Operator & Stiff & Dim.\ & Scalable & Calibrated \\
 & $g\!=\!0$ & data & learning & DAE & reduction & ($N$) & UQ \\
\midrule
DAE-PINN~\cite{moya2023daepinn} &  & \checkmark &  &  &  &  &  \\
PINNSim~\cite{stiasny2024pinnsim} &  & $\sim$\textsuperscript{a} &  &  &  & \checkmark &  \\
AL-PINN~\cite{son2023alpinn} &  & \checkmark &  &  &  &  &  \\
DC3~\cite{donti2021dc3} & \checkmark &  &  &  &  &  &  \\
PNODEs~\cite{pal2025pnodes} & \checkmark &  &  &  &  &  &  \\
DAE-HardNet~\cite{golder2025daehardnet} & \checkmark &  &  &  &  &  &  \\
PHRPINN~\cite{spotorno2025phrpinn} & \checkmark &  &  &  &  &  &  \\
DeepONet-Grid-UQ~\cite{moya2023deeponetgrid} &  &  & \checkmark &  &  &  & \checkmark$^\dagger$ \\
Koch et al.~\cite{koch2024neural} &  &  & \checkmark &  &  &  &  \\
Feedback lin.\ (FL)~\cite{zhang2025fl} &  & \checkmark &  &  &  &  &  \\
\midrule
Ours (standard Newton) & \checkmark & \checkmark & \checkmark &  &  &  &  \\
\textbf{Ours (extended Newton)} & \checkmark & \checkmark & \checkmark & \checkmark & \checkmark & \checkmark & \checkmark \\
\bottomrule
\multicolumn{8}{l}{\footnotesize $^\dagger$Bayesian UQ (distributional assumptions); ours uses distribution-free conformal prediction.}\\
\multicolumn{8}{l}{\footnotesize \textsuperscript{a}Component PINNs are physics-based (no trajectory data); network coupling uses external Newton--Raphson.}
\end{tabular}
\end{table*}

% ══════════════════════════════════════════
\section{Related Work}
% ══════════════════════════════════════════

\subsection{Soft-Constraint Methods for DAEs}

Soft-constraint approaches penalize the algebraic residual $\|\bm{g}\|^2$ during training. DAE-PINN~\cite{moya2023daepinn} combines implicit Runge--Kutta time-stepping with penalty-weighted residuals; AL-PINNs~\cite{son2023alpinn} replaces the fixed penalty with an augmented Lagrangian scheme. Trust-region sequential quadratic programming (SQP)~\cite{trSQP2024} achieves $100\times$ lower residuals than standard penalties, and feedback linearization (FL)~\cite{zhang2025fl} recasts the loss with a control-inspired multiplier update; both still drive $\|\bm{g}\| \to 0$ only asymptotically. At the system level, PINNSim~\cite{stiasny2024pinnsim} trains component-level PINNs separately and couples them via an external Newton step, but this non-differentiable coupling blocks gradient flow. Koch et al.~\cite{koch2024neural} train a differentiable algebraic surrogate $\hat{\bm{z}} = h_\phi(\bm{x})$ jointly with the ODE network via operator splitting, avoiding penalty tuning but still leaving $\|\bm{g}(\bm{x}, \hat{\bm{z}})\| > 0$. On stiff systems, this residual is amplified into the fast-state dynamics by a factor proportional to $\kappa$.

\subsection{Hard-Constraint Methods for DAEs}

Hard-constraint methods embed a solver in the forward pass to enforce $\bm{g}=\bm{0}$, with gradients via the IFT, an approach proposed by Donti et al.~\cite{donti2021dc3} for static constrained optimization. For dynamical systems, methods include manifold projection~\cite{pal2025pnodes}, KKT projection~\cite{golder2025daehardnet}, predict--project~\cite{spotorno2025phrpinn}, stabilized neural differential equations (Baumgarte stabilization)~\cite{white2023sndes}, and simultaneous collocation via the Ipopt nonlinear programming solver~\cite{lueg2025simultaneous}. All share two limitations: they require trajectory data and treat all constraints monolithically, without exploiting timescale separation.

\subsection{Operator Learning for Dynamical Systems}

PI-DeepONet~\cite{wang2021pideepont} extends DeepONet~\cite{lu2021deeponet} with physics-informed training for ODEs and PDEs. Recent extensions include PI neural operators for power system dynamics~\cite{karampinis2025pioperator} and DeepONet-Grid-UQ~\cite{moya2023deeponetgrid} with Bayesian uncertainty quantification. All address ODE systems; none combines operator learning with hard DAE constraints. Our extended Newton layer bridges this gap by exploiting timescale separation.

\subsection{Singular Perturbation in Numerical Methods and Machine Learning}

Singular perturbation theory~\cite{tikhonov1952, kokotovic1999} decomposes stiff ODEs into slow and fast subsystems via the QSS approximation, $\bm{f}_\text{fast} \approx \bm{0}$, enabling model reduction~\cite{gear2006} and stable discretization~\cite{zhao2025gfm}. Ji et al.~\cite{ji2021stiffpinn} apply QSS as a preprocessing step to reduce stiffness before PINN training, demonstrated on the Robertson kinetics benchmark we revisit in Section~\ref{sec:robertson}. In machine learning, Lee and Temam~\cite{lee2025fsnn} propose fast-slow neural networks that encode timescale separation architecturally from trajectory data. Caldana et al.~\cite{caldana2025neuralode} address stiffness in neural ODE surrogates through data-driven time reparametrization. All three are data-driven and do not enforce algebraic constraints. Our extended Newton layer instead embeds the QSS solution within a differentiable implicit layer, enforcing constraints exactly without manual equation rewriting.

\subsection{Uncertainty Quantification for Neural Surrogates}

Conformal prediction (CP)~\cite{angelopoulos2023conformal, shafer2008tutorial} provides distribution-free prediction sets from held-out residuals. Extensions include conformalized quantile regression~\cite{romano2019conformalized}, adaptive conformal inference~\cite{gibbs2021adaptive}, and multi-step time-series forecasting~\cite{stankeviciute2021conformal}. Existing uncertainty quantification for power-system surrogates is primarily Bayesian~\cite{moya2023deeponetgrid}; no prior work applies CP to hard-constrained DAE surrogates. In our framework, Newton-solved states inherit uncertainty from slow-state predictions, unlike standard time-series CP~\cite{stankeviciute2021conformal}; we use CP as a diagnostic (\S\ref{sec:cp_exp}).

% ══════════════════════════════════════════
\section{Problem Formulation}
% ══════════════════════════════════════════

\subsection{Semi-Explicit DAE}

Consider the semi-explicit index-1 DAE~\eqref{eq:dae_ode}--\eqref{eq:dae_alg}. Dropping the parameter $\bm{\mu}$ for brevity, we assume $\bm{J}_z := \partial \bm{g}/\partial \bm{z}$ is nonsingular (index-1 regularity). The IFT guarantees a unique smooth mapping $\bm{z} = \bm{\phi}(\bm{x})$ satisfying $\bm{g}(\bm{x}, \bm{\phi}(\bm{x})) = \bm{0}$. Evaluating the ODE vector field requires this mapping, available only through Newton iteration or trajectory data. This is the data--constraint dilemma of \S\ref{sec:dilemma}.

\subsection{Stiff Decomposition: Slow and Fast Modes}
\label{sec:stiff_decomp}

For prediction windows $T_w \gg \tau_\text{f}$, the fast states have settled to quasi-steady-state by the end of each window:
\begin{equation}
    \bm{f}_\text{fast}(\xslow, \xfast, \bm{z}) \approx \bm{0}.
    \label{eq:qss}
\end{equation}
This is the standard QSS approximation~\cite{tikhonov1952, kokotovic1999}. Combined with~\eqref{eq:dae_alg}, the fast and algebraic variables satisfy:
\begin{equation}
    \bm{F}(\xslow, \bm{y}) = \begin{pmatrix} \bm{f}_\text{fast}(\xslow, \xfast, \bm{z}) \\ \bm{g}(\xslow, \xfast, \bm{z}) \end{pmatrix} = \bm{0}, \quad \bm{y} = (\xfast, \bm{z}) \in \R^{n_\text{f}+n_z}
    \label{eq:extended_system}
\end{equation}
which Newton solves for $\bm{y}$ given $\xslow$, extending its scope from $n_z$ algebraic unknowns to $n_z + n_\text{f}$ unknowns using only the governing equations.

\begin{proposition}[Extended system regularity]\label{prop:regularity}
If the DAE~\eqref{eq:dae_ode}--\eqref{eq:dae_alg} is index-1 (i.e., $\bm{J}_z$ is nonsingular), and the reduced fast subsystem, obtained by eliminating $\bm{z}$ via the IFT, has all eigenvalues with strictly negative real parts, then the combined Jacobian $\partial \bm{F}/\partial \bm{y}$ is nonsingular and Newton converges locally for~\eqref{eq:extended_system}.
\end{proposition}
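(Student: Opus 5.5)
The plan is to write the combined Jacobian in block form and reduce its nonsingularity to that of $\bm{J}_z$ and of a Schur complement. With $\bm{y}=(\xfast,\bm{z})$,
\begin{equation*}
\frac{\partial \bm{F}}{\partial \bm{y}} = \begin{pmatrix} \bm{A} & \bm{B} \\ \bm{C} & \bm{J}_z \end{pmatrix},\quad
\bm{A}=\frac{\partial \bm{f}_\text{fast}}{\partial \xfast},\;
\bm{B}=\frac{\partial \bm{f}_\text{fast}}{\partial \bm{z}},\;
\bm{C}=\frac{\partial \bm{g}}{\partial \xfast}.
\end{equation*}
Since $\bm{J}_z$ is nonsingular by the index-1 hypothesis, the standard block determinant identity gives
\begin{equation*}
\det\frac{\partial \bm{F}}{\partial \bm{y}} = \det(\bm{J}_z)\,\det\bigl(\bm{A}-\bm{B}\bm{J}_z^{-1}\bm{C}\bigr).
\end{equation*}
So everything reduces to showing that the Schur complement $\bm{S}:=\bm{A}-\bm{B}\bm{J}_z^{-1}\bm{C}$ is nonsingular.

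The second step identifies $\bm{S}$ with the Jacobian of the reduced fast subsystem. Eliminating $\bm{z}$ via the IFT gives $\bm{z}=\bm{\phi}(\xslow,\xfast)$, and the reduced fast dynamics are $\dot{\bm{x}}_{\text{f}}=\tilde{\bm{f}}(\xslow,\xfast):=\bm{f}_\text{fast}(\xslow,\xfast,\bm{\phi}(\xslow,\xfast))$. Differentiating $\bm{g}(\xslow,\xfast,\bm{\phi})=\bm{0}$ with respect to $\xfast$ yields $\partial\bm{\phi}/\partial\xfast=-\bm{J}_z^{-1}\bm{C}$. The chain rule then gives $\partial\tilde{\bm{f}}/\partial\xfast=\bm{A}-\bm{B}\bm{J}_z^{-1}\bm{C}=\bm{S}$. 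The hypothesis that all eigenvalues of this matrix have strictly negative real part excludes a zero eigenvalue, so $\det\bm{S}\neq 0$. Hence $\partial\bm{F}/\partial\bm{y}$ is nonsingular.

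For local Newton convergence, the plan is to invoke the classical Newton--Kantorovich / local convergence theorem. Assume $\bm{F}$ is $C^1$ with locally Lipschitz Jacobian, as $\bm{f}$ and $\bm{g}$ are smooth in the DAE setting. Then nonsingularity of $\partial\bm{F}/\partial\bm{y}$ at a root $\bm{y}^*(\xslow)$ gives a neighborhood where the Newton iterates converge quadratically, and in any case superlinearly. By continuity of the Jacobian, nonsingularity also persists on a neighborhood of $(\xslow,\bm{y}^*)$. The IFT therefore additionally gives a smooth solution map $\xslow\mapsto\bm{y}^*(\xslow)$, which is what the layer differentiates through.

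The main obstacle is one of interpretation rather than computation. The statement implicitly assumes that a QSS root $\bm{y}^*$ of $\bm{F}(\xslow,\cdot)=\bm{0}$ exists, and that the eigenvalue condition on $\bm{S}$ is evaluated at that root, or uniformly along the slow manifold. I would make this explicit as the standing assumption, as in Tikhonov's theorem. I would also note that the argument uses only the absence of a zero eigenvalue; stability (negative real parts) is stronger than needed for nonsingularity and is what justifies the QSS approximation itself.
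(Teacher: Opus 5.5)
Your proposal is correct and follows the paper's proof essentially step for step. It uses the same block partition with $\bm{J}_z$ as the Schur pivot, the identity $\det(\partial\bm{F}/\partial\bm{y})=\det(\bm{J}_z)\det(\bm{S})$, and the identification of $\bm{S}=\bm{A}-\bm{B}\bm{J}_z^{-1}\bm{C}$ with the Jacobian of the reduced fast dynamics, whose Hurwitz property excludes a zero eigenvalue. Beyond the paper, you state the standing assumptions explicitly: that a QSS root exists, that the Jacobian is locally Lipschitz for the local Newton theorem, and that the eigenvalue condition is evaluated at that root. The paper leaves these implicit, and they are what make its closing claim that Newton converges locally rigorous.
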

\begin{proof}
The combined Jacobian has block structure
$\partial \bm{F}/\partial \bm{y} = \bigl(\begin{smallmatrix} \bm{A} & \bm{B} \\ \bm{C} & \bm{J}_z \end{smallmatrix}\bigr)$,
where $\bm{A} = \partial \bm{f}_\text{fast}/\partial \xfast$, $\bm{B} = \partial \bm{f}_\text{fast}/\partial \bm{z}$, $\bm{C} = \partial \bm{g}/\partial \xfast$, and all derivatives are partial (other variables held fixed). Using $\bm{J}_z$ as the Schur pivot:
\begin{equation}
    \det\!\left(\frac{\partial \bm{F}}{\partial \bm{y}}\right) = \det(\bm{J}_z)\;\det(\bm{S}),
    \label{eq:schur}
\end{equation}

\noindent
where $\bm{S} = \bm{A} - \bm{B}\,\bm{J}_z^{-1}\bm{C}$. The first term in~\eqref{eq:schur} is nonsingular by the index-1 assumption. The Schur complement $\bm{S}$ is precisely the total derivative of $\bm{f}_\text{fast}$ with respect to $\xfast$ after eliminating $\bm{z}$ via $\bm{z}^* = \bm{\phi}(\bm{x})$:
\begin{equation*}
    \bm{S} = \frac{\partial \bm{f}_\text{fast}}{\partial \xfast} - \frac{\partial \bm{f}_\text{fast}}{\partial \bm{z}}\,\bm{J}_z^{-1}\frac{\partial \bm{g}}{\partial \xfast} = \frac{d\bm{f}_\text{fast}}{d\xfast}\bigg|_{\bm{g}=\bm{0}}.
\end{equation*}
This is the Jacobian of the reduced fast dynamics on the constraint manifold, whose eigenvalues coincide with the fast eigenvalues of the original DAE. By hypothesis, all have strictly negative real parts, so $\bm{S}$ is nonsingular. Both terms in~\eqref{eq:schur} are nonsingular, hence $\partial \bm{F}/\partial \bm{y}$ is nonsingular, and Newton converges locally.
\end{proof}

\begin{remark}[Failure modes]\label{rmk:failure}
Regularity of $\partial\bm{F}/\partial\bm{y}$ requires both terms in~\eqref{eq:schur} to be nonsingular. This can be violated when: (a)~fast modes become weakly damped (decay rate $\alpha \to 0$), e.g., near oscillatory bifurcations such as a Hopf bifurcation; or (b)~the algebraic Jacobian $\bm{J}_z$ approaches singularity. In practice, the Newton residual $\|\bm{F}\|$ provides an online diagnostic: if Newton fails to converge within a prescribed iteration budget, the surrogate flags unreliable predictions.
\end{remark}

\begin{proposition}[QSS approximation error]\label{prop:qss_error}
Let $\xfast^\text{true}(t)$ be the true fast state at the 
end of a prediction window $t = T_w$, and let $\xfast^*$ 
be the QSS solution from~\eqref{eq:extended_system} 
evaluated at $\xslow(T_w)$. Assume $\xslow(t) \in C^1[0, T_w]$ with bounded $\|\dotxslow\|_\infty$, and that the reduced fast Jacobian $\bm{S}$ (defined in Proposition~\ref{prop:regularity}) is uniformly Hurwitz along the trajectory with minimum decay rate $\alpha = \min_i |\mathrm{Re}(\lambda_i(\bm{S}))| > 0$. Then:
\begin{equation}
    \|\xfast^\text{true}(T_w) - \xfast^*\| \leq C_1 e^{-\alpha T_w} + C_2 \frac{\|\dot{\xslow}\|_\infty}{\alpha}
    \label{eq:qss_bound}
\end{equation}
where $C_1$ depends on the initial fast-state deviation $\|\xfast(0) - \xfast^*(0)\|$. The constant $C_2 = \sup_{t \in [0, T_w]} 
\|\bm{S}^{-1}
(\partial \bm{f}_\text{fast}/\partial \xslow)\|$ 
bounds the sensitivity of $\xfast^*$ to slow-state variation along the trajectory, and remains bounded as long as $\bm{S}$ stays uniformly Hurwitz.
\end{proposition}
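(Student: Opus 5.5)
The plan is a standard Tikhonov-type error estimate for the deviation $\bm{e}(t) := \xfast(t) - \xfast^*(t)$. Here $\xfast^*(t)$ is the QSS branch defined by~\eqref{eq:extended_system} along the true slow trajectory $\xslow(t)$, and $\bm{z}$ is always eliminated through $\bm{z} = \bm{\phi}(\bm{x})$.

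\textbf{Step 1: the QSS branch is smooth and moves slowly.} By Proposition~\ref{prop:regularity}, $\partial\bm{F}/\partial\bm{y}$ is nonsingular along the trajectory. The IFT then gives a $C^1$ map $\xslow \mapsto \bm{y}^*(\xslow) = (\xfast^*, \bm{z}^*)$. Differentiating $\bm{f}_\text{fast}(\xslow, \xfast^*, \bm{\phi}) = \bm{0}$ and using the same Schur elimination as in~\eqref{eq:schur} gives $d\xfast^*/d\xslow = -\bm{S}^{-1}\,(\partial \bm{f}_\text{fast}/\partial \xslow)$. Here $\partial/\partial\xslow$ is understood as the total derivative on the constraint manifold $\bm{g}=\bm{0}$. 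Hence $\|\dot{\xfast}^*(t)\| \le C_2\,\|\dotxslow\|_\infty$, with $C_2$ exactly as in the statement.

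\textbf{Step 2: error dynamics.} The true fast state satisfies $\dot{\xfast} = \bm{f}_\text{fast}(\xslow, \xfast, \bm{\phi}(\bm{x}))$, and the reduced right-hand side vanishes at $\xfast^*$. Applying the mean value theorem in integral form gives
\begin{equation*}
\dot{\bm{e}} = \bar{\bm{S}}(t)\,\bm{e} - \dot{\xfast}^*, \qquad \bar{\bm{S}}(t) = \int_0^1 \bm{S}\bigl(\xslow, \xfast^* + \theta \bm{e}\bigr)\,d\theta .
\end{equation*}
To first order, i.e.\ locally with $\bar{\bm{S}} \approx \bm{S}$, this is a linear time-varying system with a uniformly Hurwitz matrix forced by $-\dot{\xfast}^*$.

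\textbf{Step 3: exponential stability and variation of constants.} Let $\Phi(t,s)$ be the transition matrix. I would establish $\|\Phi(t,s)\| \le K e^{-\alpha (t-s)}$. Then
\begin{equation*}
\|\bm{e}(T_w)\| \le K e^{-\alpha T_w}\|\bm{e}(0)\| + \int_0^{T_w} K e^{-\alpha(T_w-s)}\,C_2\|\dotxslow\|_\infty\,ds \le C_1 e^{-\alpha T_w} + K C_2\frac{\|\dotxslow\|_\infty}{\alpha}.
\end{equation*}
This uses $C_1 = K\|\xfast(0)-\xfast^*(0)\|$, and $K$ is absorbed into the constant. It gives~\eqref{eq:qss_bound}.

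\textbf{Main obstacle.} The hard step is Step 3. Pointwise Hurwitz eigenvalues of a time-varying matrix do not by themselves imply the uniform decay $K e^{-\alpha(t-s)}$. I would handle this in one of two ways:
\begin{itemize}
\item Interpret "uniformly Hurwitz with decay rate $\alpha$" as a logarithmic-norm condition $\mu(\bm{S}) \le -\alpha$. This gives $K=1$ directly via $\frac{d}{dt}\|\bm{e}\| \le \mu(\bar{\bm{S}})\|\bm{e}\| + \|\dot{\xfast}^*\|$, followed by Grönwall.
\item Alternatively, use a common quadratic Lyapunov function $\bm{P}$ from the frozen-time Lyapunov equation. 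This is valid under slow variation of $\bm{S}$, which holds because $\bm{S}$ depends on $t$ only through the slow state, so $\|\dot{\bm{S}}\| = O(\|\dotxslow\|)$. The decay rate degrades slightly to some $\alpha' < \alpha$, absorbed into the constants.
\end{itemize}
A further caveat is the nonlinear remainder in $\bar{\bm{S}}$. It requires $\bm{e}$ to stay in a neighborhood where the uniform Hurwitz bound still holds, so the estimate is local in the initial deviation, consistent with $C_1$ depending on $\|\xfast(0)-\xfast^*(0)\|$.
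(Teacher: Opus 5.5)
Your proposal follows the same route as the paper's proof sketch: an exponentially decaying boundary layer governed by the Hurwitz reduced Jacobian $\bm{S}$, plus a lag of order $C_2\|\dotxslow\|_\infty/\alpha$ because the QSS branch $\xfast^*(\xslow(t))$ moves with speed at most $C_2\|\dotxslow\|_\infty$; you make this explicit through the error equation $\dot{\bm{e}} = \bar{\bm{S}}\bm{e} - \dot{\bm{x}}_{\text{f}}^*$ and variation of constants instead of citing Tikhonov's theorem. Your Step~3 caveat is correct and is glossed over in the paper: with $\alpha$ defined from eigenvalues and $C_2$ fixed as stated, those exact constants need the stronger condition $\mu(\bm{S}) \le -\alpha$, which forces $\sigma_{\min}(\bm{S}) \ge \alpha$ and already fails in the Euclidean norm for the paper's own GFM ($\sigma_{\min}(\bm{S}) = 949 < \alpha \approx 1{,}357$), so in general the bound holds only with an extra factor $K \ge 1$ and possibly a slightly reduced rate, as in your Lyapunov alternative.
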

\begin{proof}[Proof sketch]
By Tikhonov's theorem~\cite{tikhonov1952}, the fast states track $\xfast^*(\xslow(t))$ with an exponentially decaying boundary layer of $O(e^{-\alpha t})$. The residual tracking error arises because $\xslow$ is not constant: as $\xslow$ drifts by $O(\|\dotxslow\|_\infty \cdot \tau)$ over timescale $\tau$, the fast states lag by $O(C_2 \|\dotxslow\|_\infty/\alpha)$. For $T_w \gg 1/\alpha$, the first term vanishes and the QSS error is $O(\|\dotxslow\|_\infty/\alpha)$.
\end{proof}

For systems with $\alpha \gg 1/T_w$, the first term vanishes exponentially and the QSS error is dominated by the tracking residual $O(C_2\|\dotxslow\|_\infty/\alpha)$.

\subsection{Multi-Component Extension}
\label{sec:multi_component}

Engineered systems often comprise $N$ interconnected components:
\begin{align}
    \dot{\bm{x}}_i &= \bm{f}_i(\bm{x}_i, \bm{z}_i, \bm{v}), & i &= 1,\ldots,N \label{eq:local_ode}\\
    \bm{0} &= \bm{g}_i^\text{local}(\bm{x}_i, \bm{z}_i), & i &= 1,\ldots,N \label{eq:local_alg}
\end{align}
coupled through shared network variables $\bm{v} \in \R^{n_v}$:
\begin{equation}
    \bm{0} = \bm{g}^\text{net}(\bm{z}_1, \ldots, \bm{z}_N, \bm{v}). \label{eq:net_alg}
\end{equation}
If the local Jacobians $\bm{J}_{z_i} = \partial\bm{g}_i^\text{local}/\partial \bm{z}_i$ are triangular (common in controller cascades), local constraints are solved by forward substitution at $O(n_{z_i}^2)$ per component; only the network coupling~\eqref{eq:net_alg} requires iteration (\S\ref{sec:cascaded}).

% ══════════════════════════════════════════
\section{Method}
% ══════════════════════════════════════════

Our framework has three components: (\S\ref{sec:newton_deeponet}) a Newton-in-the-loop PI-DeepONet for simulation-free training with exact constraints, (\S\ref{sec:extended_newton}) an extended Newton layer that reduces the output dimension via timescale separation, and (\S\ref{sec:cascaded}) cascaded implicit layers for multi-component scalability. Conformal prediction serves as a diagnostic (\S\ref{sec:cp_exp}).

\subsection{Newton-in-the-Loop PI-DeepONet}
\label{sec:newton_deeponet}

As illustrated in Fig.~\ref{fig:architecture}, the PI-DeepONet predicts the states from initial conditions and parameters, while a coupled Newton layer computes the corresponding fast states and algebraic variables. This Newton layer resolves the data--constraint dilemma along three axes: (a)~it drives $\|\bm{g}\|$ to machine precision ($\leq 10^{-14}$) regardless of network training quality; (b)~given any $\xhat$, it computes consistent $\ztrue$ without external data, so the ODE residual $\|\dot{\xhat} - \bm{f}(\xhat, \ztrue)\|^2$ is evaluated from the governing equations alone; and (c)~the IFT yields $\partial \ztrue / \partial \xhat = -\bm{J}_z^{-1} (\partial \bm{g}/\partial \bm{x})$, coupling the constraint into the training gradient.

\begin{figure}[t]
    \centering
    \resizebox{\columnwidth}{!}{%
    \begin{tikzpicture}[
        block/.style={draw, rounded corners=2pt, minimum height=0.7cm, minimum width=1.5cm, font=\small, fill=white},
        arr/.style={-{Stealth[length=2mm]}, thick},
        dasharr/.style={-{Stealth[length=2mm]}, thick, dashed},
        every node/.style={font=\small}
    ]
    % ======= ROW 1 (top): Loss + vector field =======
    \node[block, fill=red!10] at (0, 2.6) (loss) {$\mathcal{L}$};
    \node[font=\footnotesize] at (0, 3.15) {$d\xshat/dt = \bm{f}_\text{slow}$};
    \node[block, fill=purple!10, minimum width=1.6cm] at (3.0, 2.6) (rhs) {$\bm{f}_\text{slow}$};
    \draw[arr] (rhs) -- (loss);
    % --- Inputs ---
    \node[font=\footnotesize, align=right] at (-1.8, 0.45) (inp1) {$\bm{x}_{\text{s},0}, \bm{\mu}$\\$\bm{u}(t)$};
    \node[font=\footnotesize] at (-1.8, -0.45) (inp2) {$t$};
    % --- Branch / Trunk ---
    \node[block, fill=blue!8] at (0, 0.45) (branch) {Branch $\bm{b}$};
    \node[block, fill=blue!8] at (0, -0.45) (trunk) {Trunk $\bm{t}$};
    \draw[arr] (inp1) -- (branch); \draw[arr] (inp2) -- (trunk);
    % --- Dot product ---
    \node[draw, circle, inner sep=1.5pt, fill=yellow!15] at (1.6, 0) (dot) {$\otimes$};
    \draw[arr] (branch.east) -- ++(0.2,0) |- (dot);
    \draw[arr] (trunk.east) -- ++(0.2,0) |- (dot);
    % --- xhat ---
    \node[block, fill=green!10] at (3.0, 0) (xhat) {$\xshat$};
    \draw[arr] (dot) -- (xhat);
    % --- Extended Newton box ---
    \draw[dashed, rounded corners=4pt, fill=orange!5, thick]
        (4.0, -1.0) rectangle (7.2, 1.0);
    \node[font=\footnotesize\bfseries] at (5.6, 0.75) {Extended Newton Layer};
    \node[block, fill=orange!15, minimum width=2.4cm] at (5.6, -0.15) (newton) {%
        \begin{tabular}{c}\footnotesize $[\bm{f}_\text{fast};\; \bm{g}] = \bm{0}$\\[-2pt]\footnotesize $\to (\xfast^*, \ztrue)$\end{tabular}};
    \draw[arr] (xhat.east) -- (4.0, 0);
    % --- Output ---
    \node[block, fill=green!10] at (8.6, 0) (yout) {$\xfast^*, \ztrue$};
    \draw[arr] (7.2, 0) -- (yout);
    \draw[arr] (xhat.north) -- ++(0, 0.7) -- (3.0, 2.25);
    \draw[arr] (yout.north) -- (8.6, 2.6) -- (rhs.east);
    \draw[dasharr, blue!60] (loss.south) -- ++(0, -0.35) -- (branch.north)
        node[pos=0.3, left, font=\tiny, blue!60] {$\nabla_\theta$ (IFT)};
    \draw[decorate, decoration={brace, amplitude=4pt, mirror}, thick, gray]
        (-0.85, -1.05) -- (0.85, -1.05)
        node[midway, below=4pt, font=\footnotesize, gray] {PI-DeepONet};
    \end{tikzpicture}%
    }
    \caption{Proposed architecture. The PI-DeepONet predicts slow states $\xshat \in \R^{n_\text{s}}$ from initial conditions $\bm{x}_{\text{s},0}$, parameters $\bm{\mu}$, and external input $\bm{u}(t)$. The extended Newton layer solves $[\bm{f}_\text{fast};\, \bm{g}]=\bm{0}$ for all $n_\text{f}+n_z$ fast states and algebraic variables. IFT gradients (dashed blue) flow back to the network without simulation data.}
    \label{fig:architecture}
\end{figure}
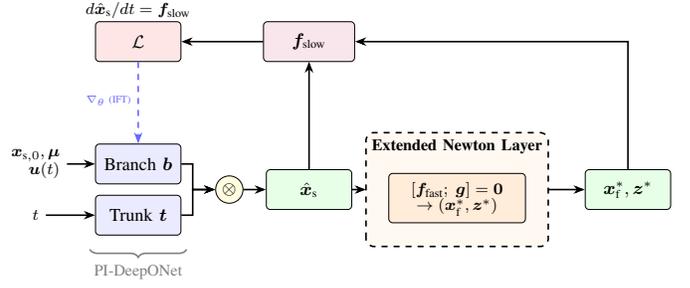

\textbf{Constraint-aware gradient.} The total derivative of $\bm{f}$ w.r.t.\ the predicted states is:
\begin{equation}
    \frac{d\bm{f}}{d\xhat} = \underbrace{\frac{\partial \bm{f}}{\partial \bm{x}}}_{\text{direct}} + \underbrace{\frac{\partial \bm{f}}{\partial \bm{z}} \cdot \frac{\partial \ztrue}{\partial \xhat}}_{\substack{\text{indirect: how } \xhat \text{ errors}\\\text{propagate through } \bm{g}=\bm{0}}}
    \label{eq:total_deriv}
\end{equation}
Penalty methods treat $\|\bm{g}\|^2$ and $\|\dot{\xhat} - \bm{f}\|^2$ as independent losses, capturing only the direct term. The Newton layer provides the indirect coupling automatically via the IFT, which carries $\kappa$-scaled entries that dominate the gradient on stiff systems; extended Newton (\S\ref{sec:extended_newton}) captures both algebraic and QSS coupling through~\eqref{eq:extended_ift}.

The PI-DeepONet uses a branch-trunk architecture~\cite{lu2021deeponet} with Fourier features and residual multi-layer perceptrons (MLPs)~\cite{wang2021pideepont}. Training minimizes the ODE residual with causal weighting:
\begin{equation}
    \mathcal{L} = \sum_{s=1}^{n_\text{out}} \lambda_s \sum_{j=1}^{n_\text{chunks}} w_j \left\| \frac{\partial \xhat_s}{\partial t} - f_s(\xhat, \ztrue) \right\|^2_{\mathcal{T}_j}
    \label{eq:loss}
\end{equation}
where $n_\text{out} = n_\text{s}$ (extended Newton) or $n_x$ (standard Newton), $n_\text{chunks}$ is the number of temporal segments per window, $\lambda_s$ are adaptive per-state weights, and $w_j = \exp(-\epsilon \sum_{k<j} L_k)$ are causal weights~\cite{wang2021pideepont} that down-weight later chunks until earlier ones converge. No $\|\bm{g}\|$ penalty is needed since constraints are enforced architecturally. For $T \gg T_w$, the surrogate applies recursively, with each window's output seeding the next.

\subsection{Extended Newton: Singular Perturbation Reduction}
\label{sec:extended_newton}

The approach of \S\ref{sec:newton_deeponet} still predicts all $n_x$ states; since physics determines the fast states exactly, the network should not predict them. We extend Newton iteration from purely algebraic constraints to the combined system~\eqref{eq:extended_system}, so the PI-DeepONet predicts only the $n_\text{s}$ slow states $\xshat$ and the extended Newton layer solves
\begin{equation}
    \bm{F}(\xshat, \bm{y}) = \begin{pmatrix} \bm{f}_\text{fast}(\xshat, \xfast, \bm{z}) \\ \bm{g}(\xshat, \xfast, \bm{z}) \end{pmatrix} = \bm{0}
    \label{eq:extended_newton}
\end{equation}
for $\bm{y} = (\xfast, \bm{z}) \in \R^{n_\text{f}+n_z}$, using the analytical Jacobian $\partial \bm{F}/\partial \bm{y}$, whose nonsingularity is guaranteed by Proposition~\ref{prop:regularity}. This forms a single system of dimension $n_\text{f}+n_z$: the network predicts $n_\text{s}$ smooth outputs while Newton provides all remaining states exactly ($\|\bm{F}\| \leq 10^{-16}$).

\begin{proposition}[Per-window vector field contamination]\label{prop:rhs_contamination}
Let $\xshat$ be the predicted slow states with error $\epsilon_\text{s} = \|\xshat - \bm{x}_\text{s}^\text{true}\|$, and let $\hat{\bm{x}}_\text{f}$ be the predicted fast states with error $\epsilon_\text{f} = \|\hat{\bm{x}}_\text{f} - \bm{x}_\text{f}^\text{true}\|$. Assume $\bm{f}_\text{slow}$ is locally Lipschitz in $(\bm{x}_\text{s}, \bm{x}_\text{f}, \bm{z})$, and that $\bm{J}_z$ and $\partial\bm{F}/\partial\bm{y}$ remain nonsingular in a neighborhood of the true solution (Proposition~\ref{prop:regularity}). Then within a single prediction window:

(a) Standard Newton (network predicts $\xslow$ and $\xfast$, Newton solves $\bm{g}=\bm{0}$ only):
\begin{equation}
    \|\bm{f}_\text{slow}(\hat{\bm{x}}) - \bm{f}_\text{slow}(\bm{x}^\text{true})\| \leq L_\text{s}\, \epsilon_\text{s} + \underbrace{L_\kappa\, \epsilon_\text{f}}_{\substack{\text{fast-state error}\\\text{amplified by }L_\kappa}}
    \label{eq:rhs_standard}
\end{equation}

(b) Extended Newton (network predicts $\xslow$ only, Newton solves $[\bm{f}_\text{fast};\bm{g}]=\bm{0}$):
\begin{equation}
    \|\bm{f}_\text{slow}(\xshat, \bm{y}^*) - \bm{f}_\text{slow}(\bm{x}_\text{s}^\text{true}, \bm{y}^\text{true})\| \leq L_\text{s}\, \epsilon_\text{s} + \underbrace{C_\text{qss}}_{\substack{\text{QSS error}\\\text{independent of }L_\kappa}}
    \label{eq:rhs_extended}
\end{equation}
where $L_\text{s}$ is the Lipschitz constant of $\bm{f}_\text{slow}$ w.r.t.\ $\bm{x}_\text{s}$, and $L_\kappa$ is the effective fast-to-slow coupling:
\begin{equation}
    L_\kappa = \left\|\frac{\partial \bm{f}_\text{slow}}{\partial \bm{x}_\text{f}}\right\| + \left\|\frac{\partial \bm{f}_\text{slow}}{\partial \bm{z}}\right\| \left\|\bm{J}_z^{-1} \frac{\partial \bm{g}}{\partial \bm{x}_\text{f}}\right\|
    \label{eq:Lkappa_def}
\end{equation}
capturing both direct and indirect (through $\bm{g}=\bm{0}$) coupling. $L_\kappa$ measures how fast-state errors contaminate $\bm{f}_\text{slow}$ (distinct from the timescale ratio $\kappa$). The constant $C_\text{qss} = O(\|\dotxslow\|_\infty/\alpha)$ is the QSS error from Proposition~\ref{prop:qss_error}.
\end{proposition}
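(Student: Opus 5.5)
The plan is to treat both parts as Lipschitz/triangle-inequality estimates around the true solution. The common device in both cases is to decompose the error in $\bm{f}_\text{slow}$ by inserting intermediate points, one argument at a time. Throughout we work in a neighborhood $\mathcal{N}$ of the true trajectory on which $\bm{J}_z$ and $\partial\bm{F}/\partial\bm{y}$ are nonsingular. By Proposition~\ref{prop:regularity} and the IFT, the maps $\bm{z}=\bm{\phi}(\xslow,\xfast)$ and $\bm{y}^*=\bm{\psi}(\xslow)$ are then well defined and $C^1$ on $\mathcal{N}$. The Lipschitz constants are taken as suprema of the relevant partial-derivative norms over $\mathcal{N}$, which is convex after shrinking if needed, so that the mean value inequality applies.

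\textbf{Part (a).} Standard Newton returns $\hat{\bm{z}}=\bm{\phi}(\xshat,\hat{\bm{x}}_\text{f})$ exactly, while the true state has $\bm{z}^\text{true}=\bm{\phi}(\bm{x}^\text{true}_\text{s},\bm{x}^\text{true}_\text{f})$. Define the composite map $\tilde{\bm{f}}(\xslow,\xfast):=\bm{f}_\text{slow}(\xslow,\xfast,\bm{\phi}(\xslow,\xfast))$. Its partial derivative in $\xfast$ is
\[
\frac{\partial \bm{f}_\text{slow}}{\partial \bm{x}_\text{f}} - \frac{\partial \bm{f}_\text{slow}}{\partial \bm{z}}\,\bm{J}_z^{-1}\frac{\partial \bm{g}}{\partial \bm{x}_\text{f}},
\]
using the IFT formula $\partial\bm{\phi}/\partial\xfast=-\bm{J}_z^{-1}\partial\bm{g}/\partial\xfast$. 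Its norm is bounded by $L_\kappa$ in~\eqref{eq:Lkappa_def} via the triangle inequality and submultiplicativity. Next, insert the intermediate point $(\xshat,\bm{x}^\text{true}_\text{f})$ and split the difference into a slow-increment term and a fast-increment term. The fast increment is bounded by $L_\kappa\epsilon_\text{f}$ by the mean value inequality. The slow increment is bounded by $L_\text{s}\epsilon_\text{s}$, where $L_\text{s}$ is interpreted as the Lipschitz constant of the composite $\tilde{\bm{f}}$ in $\xslow$. This includes the indirect $\bm{z}$-path and is a harmless redefinition that we should state explicitly. Adding the two bounds gives~\eqref{eq:rhs_standard}.

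\textbf{Part (b).} The extended Newton layer returns $\bm{y}^*=\bm{\psi}(\xshat)$. Compare it with the QSS point at the true slow state, $\bm{y}^*_\text{true}:=\bm{\psi}(\bm{x}^\text{true}_\text{s})$, and write
\[
\bm{f}_\text{slow}(\xshat,\bm{y}^*)-\bm{f}_\text{slow}(\bm{x}^\text{true}_\text{s},\bm{y}^\text{true})
=\bigl[\bm{f}_\text{slow}(\xshat,\bm{\psi}(\xshat))-\bm{f}_\text{slow}(\bm{x}^\text{true}_\text{s},\bm{\psi}(\bm{x}^\text{true}_\text{s}))\bigr]
+\bigl[\bm{f}_\text{slow}(\bm{x}^\text{true}_\text{s},\bm{y}^*_\text{true})-\bm{f}_\text{slow}(\bm{x}^\text{true}_\text{s},\bm{y}^\text{true})\bigr].
\]
The first bracket is a function of the slow state alone, namely the reduced slow vector field. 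It is bounded by $L_\text{s}\epsilon_\text{s}$, with $L_\text{s}$ now the Lipschitz constant of the reduced map $\xslow\mapsto\bm{f}_\text{slow}(\xslow,\bm{\psi}(\xslow))$. Its derivative involves $(\partial\bm{F}/\partial\bm{y})^{-1}$, but no predicted fast-state error enters; this is precisely why no $L_\kappa\epsilon_\text{f}$ term appears. For the second bracket, apply the Lipschitz property of $\bm{f}_\text{slow}$ in $\bm{y}$ to get a bound of $L_y\|\bm{y}^*_\text{true}-\bm{y}^\text{true}\|$. Then invoke Proposition~\ref{prop:qss_error} for the $\xfast$ component and the IFT map $\bm{\phi}$ to transfer that bound to the $\bm{z}$ component. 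For $T_w\gg1/\alpha$ this gives $O(\|\dotxslow\|_\infty/\alpha)$, and the constant $L_y$ is absorbed into $C_\text{qss}$, giving~\eqref{eq:rhs_extended}.

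\textbf{Main obstacle.} The delicate point is the claim that $C_\text{qss}$ is ``independent of $L_\kappa$''. Absorbing $L_y$, which contains $\|\partial\bm{f}_\text{slow}/\partial\xfast\|$, into $C_\text{qss}$ superficially reintroduces the coupling. I would first try to argue that the QSS error is multiplied by the coupling only through the fixed physical constant $C_2\|\dotxslow\|_\infty/\alpha$, not through the network's fast-state error $\epsilon_\text{f}$. The honest statement is that the dependence on the \emph{learned} error $\epsilon_\text{f}$ is eliminated, and the remaining term vanishes as $\alpha\to\infty$ because $1/\alpha\sim\tau_\text{f}$ offsets the $\kappa$-scaling. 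A second, milder issue is that $L_\text{s}$ differs between (a) and (b), since the composite maps differ. I would state this and take $L_\text{s}$ as the maximum of the two constants.
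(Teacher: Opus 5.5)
Your proposal is correct and follows essentially the same route as the paper. Both arguments use a Lipschitz expansion of $\bm{f}_\text{slow}$ together with the IFT derivative $-\bm{J}_z^{-1}\partial\bm{g}/\partial\bm{x}_\text{f}$ in (a); in (b) both rely on the observation that $\bm{y}^*$ depends on $\xshat$ alone, so $\epsilon_\text{f}$ never enters and only an IFT-sensitivity term in $\epsilon_\text{s}$ plus the QSS mismatch from Proposition~\ref{prop:qss_error} remain. The two caveats you flag are exactly steps the paper's proof takes implicitly: reading $L_\text{s}$ as the Lipschitz constant of the composite/reduced map so the $\bm{z}$-path contribution of $\epsilon_\text{s}$ is included, and absorbing the Lipschitz constant in $\bm{y}$ into $C_\text{qss}$, so that ``independent of $L_\kappa$'' means ``no dependence on the learned error $\epsilon_\text{f}$''. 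Stating them explicitly, as you do, is the right call.
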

\begin{proof}
For (a), Lipschitz expansion of $\bm{f}_\text{slow}$ gives $\|\Delta \bm{f}_\text{slow}\| \leq \|\partial \bm{f}_\text{slow}/\partial \bm{x}_\text{s}\| \cdot \epsilon_\text{s} + \|\partial \bm{f}_\text{slow}/\partial \bm{x}_\text{f}\| \cdot \epsilon_\text{f} + \|\partial \bm{f}_\text{slow}/\partial \bm{z}\| \cdot \|\Delta \bm{z}\|$. Since Newton enforces $\bm{g}=\bm{0}$, the IFT gives $\|\Delta \bm{z}\| = O(\epsilon_\text{s} + \epsilon_\text{f})$. The $L_\kappa$-amplification arises through two paths: the direct coupling $\|\partial \bm{f}_\text{slow}/\partial \bm{x}_\text{f}\|$, and the indirect path $\bm{x}_\text{f} \to \bm{z} \to \bm{f}_\text{slow}$, where $\|\partial \bm{f}_\text{slow}/\partial \bm{z}\| \cdot \|\partial \bm{z}^*/\partial \bm{x}_\text{f}\|$ inherits the algebraic coupling. The combined effect yields~\eqref{eq:rhs_standard}.
For (b), extended Newton solves $\bm{F}(\xshat, \bm{y}^*) = \bm{0}$, determining $\bm{y}^*$ as a function of $\xshat$ alone. By the IFT, $\partial \bm{y}^*/\partial \bm{x}_\text{s} = -(\partial \bm{F}/\partial \bm{y})^{-1}(\partial \bm{F}/\partial \bm{x}_\text{s})$. The key structural property is that $\bm{y}^*$ depends only on $\xshat$, not on any separate prediction $\hat{\bm{x}}_\text{f}$. Hence $\epsilon_\text{f}$ does not appear in the error bound: $\|\bm{y}^* - \bm{y}^{*,\text{true}}\| \leq \|(\partial \bm{F}/\partial \bm{y})^{-1}(\partial \bm{F}/\partial \bm{x}_\text{s})\| \cdot \epsilon_\text{s} + C_\text{qss}$, where the IFT sensitivity $\|(\partial \bm{F}/\partial \bm{y})^{-1}(\partial \bm{F}/\partial \bm{x}_\text{s})\|$ is bounded under the regularity assumptions but may itself be large. The $L_\kappa$-amplification of $\epsilon_\text{f}$ is absent because no fast-state prediction error enters, yielding~\eqref{eq:rhs_extended}.
\end{proof}

The per-window $L_\kappa$-amplification channel of~\eqref{eq:rhs_standard} is thus eliminated; multi-window error accumulation remains possible, validated in \S\ref{sec:gfm_setup}.

\subsubsection{Implicit Gradients for the Extended System}

Applying the IFT to~\eqref{eq:extended_newton}:
\begin{equation}
    \frac{\partial \bm{y}^*}{\partial \xshat} = -\left(\frac{\partial \bm{F}}{\partial \bm{y}}\right)^{-1} \frac{\partial \bm{F}}{\partial \xshat}
    \label{eq:extended_ift}
\end{equation}
where $\partial \bm{F}/\partial \bm{y}$ is nonsingular by Proposition~\ref{prop:regularity}. Both Jacobians can be computed analytically. The loss~\eqref{eq:loss} covers only $n_\text{s}$ slow states, with $\bm{f}_\text{slow}(\xshat, \bm{y}^*)$ evaluated using exact fast and algebraic variables from the solver.

\subsubsection{Generality}

The decomposition~\eqref{eq:decomp} applies to any stiff DAE with timescale separation, requiring only eigenvalue-based slow/fast identification; the extended Newton layer adds no parameters, loss terms, or hyperparameters beyond the partition itself.

\subsection{Cascaded Implicit Layers}
\label{sec:cascaded}

For $N$ coupled components, a monolithic system is $(N(n_\text{f}+n_z)+n_v)$-dimensional and dense (Fig.~\ref{fig:structure_comparison}).

\begin{figure}[t]
    \centering
    \resizebox{\columnwidth}{!}{%
    \begin{tikzpicture}[font=\small]
    % ===== (a) Monolithic =====
    \node[font=\normalsize\bfseries] at (1.4, 3.65) {(a) Monolithic};

    \fill[blue!15] (0,0) rectangle (2.8,2.8);
    \draw[thick] (0,0) rectangle (2.8,2.8);
    \foreach \i in {1,...,6} {
        \draw[gray!40, thin] ({0.4*\i}, 0) -- ({0.4*\i}, 2.8);
        \draw[gray!40, thin] (0, {0.4*\i}) -- (2.8, {0.4*\i});
    }
    \node[font=\scriptsize, gray] at (1.4, 1.4) {dense};

    \node[font=\scriptsize, align=center] at (1.4, -0.6) {$O((N(n_\text{f}{+}n_z){+}n_v)^3)$\\cubic in $N$};

    \node[font=\normalsize, gray] at (3.5, 1.4) {vs.};

    % ===== (b) Cascaded =====
    \node[font=\normalsize\bfseries] at (5.6, 3.65) {(b) Cascaded};

    \draw[thick] (4.2,0) rectangle (7.0,2.8);

    % Block 1
    \fill[orange!25] (4.2,2.3) -- (4.2,2.8) -- (4.7,2.3) -- cycle;
    \draw[orange!70,thick] (4.2,2.8) -- (4.7,2.3);
    \draw[thick] (4.2,2.3) rectangle (4.7,2.8);
    % Block 2
    \fill[orange!25] (4.7,1.8) -- (4.7,2.3) -- (5.2,1.8) -- cycle;
    \draw[orange!70,thick] (4.7,2.3) -- (5.2,1.8);
    \draw[thick] (4.7,1.8) rectangle (5.2,2.3);
    % Block 3
    \fill[orange!25] (5.2,1.3) -- (5.2,1.8) -- (5.7,1.3) -- cycle;
    \draw[orange!70,thick] (5.2,1.8) -- (5.7,1.3);
    \draw[thick] (5.2,1.3) rectangle (5.7,1.8);
    % Block 4
    \fill[orange!25] (5.7,0.8) -- (5.7,1.3) -- (6.2,0.8) -- cycle;
    \draw[orange!70,thick] (5.7,1.3) -- (6.2,0.8);
    \draw[thick] (5.7,0.8) rectangle (6.2,1.3);

    \node[font=\scriptsize, orange!80!black] at (5.2, 0.5) {$\times N$};

    % Network block
    \fill[red!20] (6.2, 0) rectangle (7.0, 0.8);
    \draw[thick] (6.2, 0) rectangle (7.0, 0.8);
    \node[font=\scriptsize, red!70!black, align=center] at (6.6, 0.4) {net\\$n_v$};

    % Zero regions
    \fill[white] (4.2, 0) rectangle (6.2, 0.8);
    \draw[gray!30, thick] (4.2, 0) rectangle (6.2, 0.8);
    \node[font=\scriptsize, gray!50] at (5.2, 0.4) {$\bm{0}$};
    \fill[white] (6.2, 0.8) rectangle (7.0, 2.8);
    \draw[gray!30, thick] (6.2, 0.8) rectangle (7.0, 2.8);
    \node[font=\scriptsize, gray!50] at (6.6, 1.8) {$\bm{0}$};

    % Braces
    \draw[decorate, decoration={brace, amplitude=3pt}, thick, orange!80!black]
        (4.2, 2.9) -- (6.2, 2.9)
        node[midway, above=3pt, font=\scriptsize, orange!80!black] {local ($\parallel$)};
    \draw[decorate, decoration={brace, amplitude=3pt}, thick, red!70!black]
        (6.2, 2.9) -- (7.0, 2.9)
        node[midway, above=3pt, font=\scriptsize, red!70!black] {net};

    \node[font=\scriptsize, align=center] at (5.6, -0.6) {$O(N(n_\text{f}{+}n_z)^2 {+} n_v^3)$\\linear in $N$};

    \end{tikzpicture}%
    }
    \caption{Jacobian structure. \textbf{(a)}~Monolithic: dense system. \textbf{(b)}~Cascaded: $N$ independent local blocks (parallel) plus a small network block.}
    \label{fig:structure_comparison}
\end{figure}

The computation splits into two levels (Algorithm~\ref{alg:cascaded}): each component solves its local $[\bm{f}_{\text{fast},i};\, \bm{g}_i^\text{local}] = \bm{0}$ in parallel with $\bm{v}$ fixed, then a small outer Newton step updates $\bm{v}$ to enforce~\eqref{eq:net_alg}.

\begin{algorithm}[t]
\caption{Cascaded Extended Newton: Forward Pass}
\label{alg:cascaded}
\begin{algorithmic}[1]
\REQUIRE $\{\hat{\bm{x}}_{\text{s},i}\}_{i=1}^N$, initial $\bm{v}^{(0)}$, tolerance $\epsilon$
\FOR{$k = 0, 1, \ldots$ until $\|\bm{g}^\text{net}\| < \epsilon$}
    \FOR{$i = 1, \ldots, N$ \textbf{in parallel}}
        \STATE $\bm{y}_i^{(k)} \leftarrow \text{ExtendedNewton}_i(\hat{\bm{x}}_{\text{s},i}, \bm{v}^{(k)})$
        \COMMENT{Local $[\bm{f}_\text{fast}; \bm{g}^\text{local}] = \bm{0}$}
    \ENDFOR
    \STATE $\bm{r} \leftarrow \bm{g}^\text{net}(\bm{y}_1^{(k)}, \ldots, \bm{y}_N^{(k)}, \bm{v}^{(k)})$
    \STATE $\bm{v}^{(k+1)} \leftarrow \bm{v}^{(k)} - (\partial \bm{g}^\text{net}/\partial \bm{v})^{-1}\bm{r}$
\ENDFOR
\RETURN $\{\bm{y}_i\}_{i=1}^N$, $\bm{v}^*$
\end{algorithmic}
\end{algorithm}

\begin{theorem}[Cascaded convergence]\label{thm:convergence}
Let $\bm{J}_v = \partial \bm{g}^\text{net}/\partial \bm{v}$ be the direct sensitivity of the network residual to $\bm{v}$, and let $\bm{A} = (\partial \bm{g}^\text{net}/\partial \bm{y})(\partial \bm{y}^*/\partial \bm{v})$ be the indirect sensitivity through the local solutions. Define the contraction ratio $\rho = \|\bm{J}_v^{-1} \bm{A}\|$. If $\rho < 1$, then Algorithm~\ref{alg:cascaded} converges linearly: $\|\bm{v}^{(k+1)} - \bm{v}^*\| \leq \rho \|\bm{v}^{(k)} - \bm{v}^*\|$.
\end{theorem}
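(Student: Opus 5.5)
Algorithm~\ref{alg:cascaded} is a chord (frozen-Jacobian) Newton iteration on the reduced network equation, so I would treat it as a fixed-point map in $\bm{v}$ and prove contraction. First I would eliminate the local unknowns. For each fixed $\bm{v}$, Proposition~\ref{prop:regularity} applied component-wise gives nonsingular local Jacobians $\partial[\bm{f}_{\text{fast},i};\bm{g}_i^\text{local}]/\partial\bm{y}_i$. The IFT then defines smooth maps $\bm{y}_i^*(\bm{v})$, and the inner loop of Algorithm~\ref{alg:cascaded} evaluates them exactly. Substituting, the outer loop acts on the reduced residual
\[
\bm{R}(\bm{v}) := \bm{g}^\text{net}(\bm{y}_1^*(\bm{v}),\ldots,\bm{y}_N^*(\bm{v}),\bm{v}),
\]
whose total derivative is, by the chain rule, $d\bm{R}/d\bm{v} = \bm{J}_v + \bm{A}$. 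The limit $\bm{v}^*$ is characterized by $\bm{R}(\bm{v}^*)=\bm{0}$. The update in line 6 is $\bm{v}^{(k+1)} = \bm{\Phi}(\bm{v}^{(k)})$ with $\bm{\Phi}(\bm{v}) = \bm{v} - \bm{J}_v^{-1}\bm{R}(\bm{v})$; it uses only the direct sensitivity $\bm{J}_v$, not the full derivative.

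Next I would write the error recursion. Using $\bm{R}(\bm{v}^*)=\bm{0}$ and the integral mean-value theorem along the segment from $\bm{v}^*$ to $\bm{v}^{(k)}$,
\[
\bm{v}^{(k+1)}-\bm{v}^* = \bigl(\bm{v}^{(k)}-\bm{v}^*\bigr) - \bm{J}_v^{-1}\!\int_0^1 (\bm{J}_v+\bm{A})(\bm{v}_\tau)\,d\tau\,\bigl(\bm{v}^{(k)}-\bm{v}^*\bigr).
\]
If $\bm{J}_v$ is constant, or evaluated consistently along the segment, the identity terms cancel. What remains is $-\bm{J}_v^{-1}\bar{\bm{A}}\,(\bm{v}^{(k)}-\bm{v}^*)$, where $\bar{\bm{A}}$ is the averaged indirect sensitivity. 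Taking norms gives $\|\bm{v}^{(k+1)}-\bm{v}^*\| \le \rho\,\|\bm{v}^{(k)}-\bm{v}^*\|$, provided $\rho=\|\bm{J}_v^{-1}\bm{A}\|$ is interpreted as a supremum over the neighborhood containing the iterates. Induction then shows the iterates stay in that neighborhood and converge linearly. Uniqueness of $\bm{v}^*$ in the ball follows from the contraction mapping theorem applied to $\bm{\Phi}$.

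The main obstacle is the cancellation step. In general $\bm{J}_v$ depends on $\bm{y}^{(k)}$ and $\bm{v}^{(k)}$, so the identity does not cancel exactly. I would first handle the common case where $\bm{g}^\text{net}$ is affine in $\bm{v}$ (e.g., network current balance with constant admittance), so $\bm{J}_v$ is constant and the bound is exact. In the general case I would make the statement local: assume $\bm{J}_v$ and $\bm{A}$ are Lipschitz near $\bm{v}^*$ and bound the mismatch by $O(\|\bm{v}^{(k)}-\bm{v}^*\|)$. This gives $\|\bm{e}_{k+1}\| \le (\rho + L\|\bm{e}_k\|)\|\bm{e}_k\|$, so the linear rate holds asymptotically, and holds with any $\rho'\in(\rho,1)$ for initial guesses close enough to $\bm{v}^*$. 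I would state explicitly that the theorem's clean inequality is read with $\rho$ taken as this local supremum.
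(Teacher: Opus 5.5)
Your proposal is correct and follows the same route as the paper. The paper also linearizes the reduced residual $\bm{g}^\text{net}(\bm{y}^*(\bm{v}),\bm{v})$ about $\bm{v}^*$ as $(\bm{A}+\bm{J}_v)\bm{e}^{(k)}+O(\|\bm{e}^{(k)}\|^2)$, notes that the $\bm{J}_v$-only update cancels the direct term, and obtains $\bm{e}^{(k+1)}=-\bm{J}_v^{-1}\bm{A}\,\bm{e}^{(k)}+O(\|\bm{e}^{(k)}\|^2)$. Your integral mean-value form and your caveat sharpen that argument: the paper drops the $O(\|\bm{e}^{(k)}\|^2)$ remainder when ``taking norms,'' so strictly it gives only the local, asymptotic rate, and your reading of $\rho$ as a supremum over a ball around $\bm{v}^*$ (with constant $\bm{J}_v$, or with $\rho'\in(\rho,1)$ otherwise) is what makes the stated inequality hold.
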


\begin{proof}
Let $\bm{e}^{(k)} = \bm{v}^{(k)} - \bm{v}^*$. Linearizing around the solution gives
\begin{equation}
    \bm{g}^\text{net}(\bm{y}^*(\bm{v}^{(k)}), \bm{v}^{(k)}) = (\bm{A} + \bm{J}_v)\bm{e}^{(k)} + O(\|\bm{e}^{(k)}\|^2).
    \label{eq:cascaded_linearize}
\end{equation}
The Newton step $\bm{v}^{(k+1)} = \bm{v}^{(k)} - \bm{J}_v^{-1}\bm{g}^\text{net}$ cancels the $\bm{J}_v$ term but not $\bm{A}$, leaving $\bm{e}^{(k+1)} = -\bm{J}_v^{-1}\bm{A}\,\bm{e}^{(k)} + O(\|\bm{e}^{(k)}\|^2)$. Taking norms gives the result.
\end{proof}

For weakly coupled systems ($\rho \ll 1$), a few (2 to 3) outer iterations suffice. Backpropagation via the IFT at both levels yields the exact monolithic gradient.

\begin{remark}[Estimating $\rho$ and safeguards for $\rho \geq 1$]\label{rmk:rho}
The ratio $\rho$ reflects inter-component coupling strength; weakly coupled systems have $\rho \ll 1$. When $\rho \geq 1$ (strong coupling), Algorithm~\ref{alg:cascaded} may diverge. Practical safeguards include: (a)~damped updates $\bm{v}^{(k+1)} = \bm{v}^{(k)} - \eta\,\bm{J}_v^{-1}\bm{r}$ with $\eta < 1$; (b)~Anderson acceleration; or (c)~falling back to a monolithic solve. A sudden increase in outer iterations signals stronger coupling.
\end{remark}

% ══════════════════════════════════════════
\section{Experiments}
% ══════════════════════════════════════════

We validate on stiff DAEs of increasing complexity, from power systems to chemical kinetics. The central experiment (Exp~2) compares extended Newton ($n_\text{s} = 7$) against four baselines on a stiff 21-state inverter system ($\kappa \approx 4{,}712$); cross-domain generalization is tested on the Robertson stiff chemical kinetics problem (Exp~6).

\subsection{Test Systems}
\label{sec:test_systems}

\textbf{System A: Single-machine infinite bus (SMIB).} $n_x=2$, $n_z=1$, $\kappa \sim 10$ (mild stiffness). Both states are slow, so only the base Newton layer is needed.

\textbf{System B: Grid-forming inverter (GFM).} $n_x=13$, $n_z=8$, $\kappa = \omega_b/l_f \approx 4{,}712$~\cite{zhao2025gfm}. The 13 states partition into 7 slow ($\theta$, $p_\text{oc}$, $q_\text{oc}$, $\xi_d$, $\xi_q$, $\gamma_d$, $\gamma_q$; timescales 5--96\,ms) and 6 fast ($i_\text{cv,d/q}$, $v_\text{f,d/q}$, $i_\text{f,d/q}$; timescales 0.4--0.7\,ms). Eigenvalue analysis confirms a spectral gap $|\mathrm{Re}(\lambda_\text{fast})|/|\mathrm{Re}(\lambda_\text{slow})| \geq 7.5$.

The GFM dynamics comprise a synchronization subsystem (frequency droop with low-pass filter) and a $dq$-frame voltage control subsystem (voltage droop, virtual impedance, PI inner-loop controllers, and LCL filter); see~\cite{zhao2025gfm} for the complete state-space derivation. The inner-loop PI integrators ($\xi_{d,q}$, $\gamma_{d,q}$) together with the LCL filter currents and voltages form the $n_\text{f} = 6$ fast states. The $n_z = 8$ algebraic variables arise from coordinate transforms and controller outputs; $\bm{J}_z$ is unit lower-triangular due to the cascading controller structure.

At the nominal operating point, the reduced fast Jacobian $\bm{S}$ (Proposition~\ref{prop:regularity}) has eigenvalues $-1{,}357 \pm 512j$, $-2{,}251 \pm 6{,}200j$, $-2{,}410 \pm 6{,}443j$, with $\sigma_\text{min}(\bm{S}) = 949$ and $\sigma_\text{min}(\partial\bm{F}/\partial\bm{y}) = 0.42$, confirming nonsingularity with large margin. The minimum decay rate $\alpha \approx 1{,}357$\,s$^{-1}$ (Proposition~\ref{prop:qss_error}) gives $e^{-\alpha T_w} \approx 10^{-3}$ for $T_w = 5$\,ms; $C_2 \approx 248$, giving $C_2/\alpha \approx 0.18$; and $L_\kappa \approx 841$ (Proposition~\ref{prop:rhs_contamination}).

\textbf{Baselines.} All share the same 256-hidden, 5-layer modified MLP and predict all 13 states: (a)~Penalty~\cite{moya2023daepinn}: ramped $\|\bm{g}\|^2$ weight; (b)~Augmented Lagrangian~\cite{son2023alpinn}: dual variable updates; (c)~FL~\cite{zhang2025fl}: feedback linearization optimizer; (d)~Standard Newton (ours): Newton solves 8 algebraic states with IFT backward. 
Training budget: 200k epochs (500k for AL). Newton solves use float64 to prevent stiffness-amplified rounding ($\kappa \times 10^{-7} \approx 10^{-4}$); the network remains float32.

\textbf{System C: Robertson stiff chemical kinetics (cross-domain).} $n_x=2$, $n_z=1$, with $\kappa$ ranging $10^2$--$10^5$ across the trajectory~\cite{robertson1966, hairer1996ode2}. The single differential slow state is $y_1$, the fast intermediate is $y_2 \sim 10^{-5}$, and mass conservation $y_1+y_2+y_3=1$ provides the index-1 algebraic state $y_3$. The trajectory spans seven decades, $t \in [10^{-2}, 10^5]$\,s, requiring log-time reparametrization $\tau = \log_{10}(t)$ ($T_w = 0.5$ decade, 14 windows). Used in Exp~6 to test generalization.

\subsection{Exp 1: SMIB, Base Newton Validation}

The SMIB validates the base Newton layer on a scalar algebraic constraint (quadratic in $V$, solved analytically). Three methods share the same architecture (64-basis, 96-hidden, 3-layer MLP, Fourier trunk, 100k epochs):
\begin{itemize}[nosep]
    \item \textbf{Penalty:} predicts all 3 states ($\delta, \omega, V$), penalizes $\|\bm{g}\|^2$ in the loss.
    \item \textbf{FL~\cite{zhang2025fl}:} same output, feedback-linearization optimizer.
    \item \textbf{Newton (ours):} predicts 2 states only; Newton solves $\bm{g}=\bm{0}$ exactly.
\end{itemize}

\begin{figure*}[t]
    \centering
    \includegraphics[width=\textwidth]{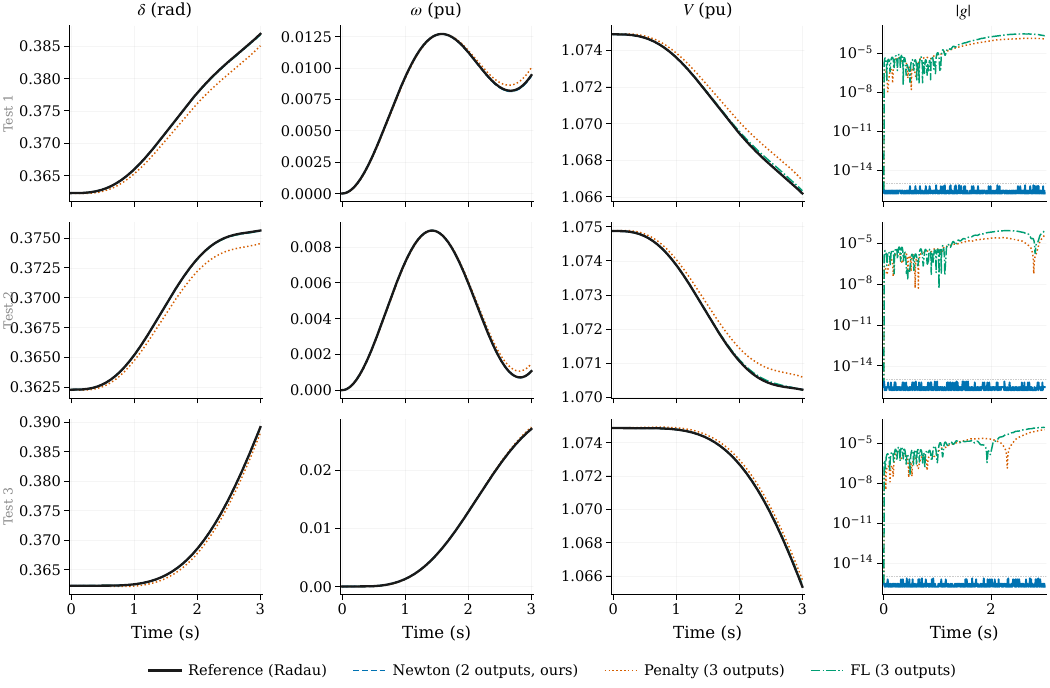}
    \caption{SMIB results on 3 test scenarios (rows). Columns 1--3: trajectories $\delta(t)$, $\omega(t)$, $V(t)$. Column 4: constraint violation $\|\bm{g}\|$ (log scale). Newton (dashed blue) achieves $\|\bm{g}\| \leq 10^{-15}$ by construction; Penalty (dotted) and FL (dash-dot) leave residuals of $10^{-3}$ to $10^{-5}$.}
    \label{fig:smib_combined}
\end{figure*}

Fig.~\ref{fig:smib_combined} and Table~\ref{tab:smib} summarize the results. Newton achieves $\|\bm{g}\| = 2.6 \times 10^{-14}$ and $74\times$ lower $\delta$ error than Penalty, which leaves $\|\bm{g}\| \sim 10^{-5}$. At mild stiffness this residual is tolerable; at $\kappa \gg 1$ it becomes catastrophic (Exp~2).

\begin{table}[t]
\centering
\caption{SMIB: $L_2$ relative error and constraint violation (10 test scenarios).}
\label{tab:smib}
\small
\begin{tabular*}{\columnwidth}{@{\extracolsep{\fill}}lcccc@{}}
\toprule
Method & $\delta$ & $\omega$ & $V$ & $\|\bm{g}\|_\text{avg}$ \\
\midrule
Penalty          & 2.14e-3          & 1.62e-2 & 2.74e-4          & 4.61e-5          \\
FL~\cite{zhang2025fl} & 1.14e-4     & 1.33e-3 & 5.50e-5          & 7.58e-5          \\
\textbf{Newton}  & \textbf{2.91e-5} & 1.57e-3 & \textbf{3.65e-6} & \textbf{2.56e-16} \\
\bottomrule
\end{tabular*}
\end{table}

\subsection{Exp 2: GFM, Standard vs.\ Extended Newton}
\label{sec:gfm_setup}

We compare five methods on the GFM system ($\kappa \approx 4{,}712$) under a voltage sag ($v_g = 0.6$\,pu, 0.5\,s horizon, $T_w = 5$\,ms, 100 recursive windows). This stiffness ratio is representative of converter-dominated grids and is roughly $500\times$ stiffer than the SMIB. No existing soft-constraint PINN method has demonstrated convergence at this level of stiffness.

\begin{table}[t]
\centering
\caption{GFM $L_2$ relative error (\%) and convergence status, voltage sag $v_g\!=\!0.6$\,pu, $\kappa \approx 4{,}712$.}
\label{tab:main_result}
\small
\begin{tabular*}{\columnwidth}{@{\extracolsep{\fill}}lccccc@{}}
\toprule
Method & $n_\text{out}$ & Slow & Fast & Alg & $\|\bm{g}\|$ \\
\midrule
Penalty~\cite{moya2023daepinn}          & 13 & 39.3 & 53.2 & 55.5 & $10^{-3}$ \\
Std.\ Newton (ours)                      & 13 & 57.0 & 29.7 & 69.5 & $10^{-14}$ \\
AL~\cite{son2023alpinn}                  & 13 & \multicolumn{3}{c}{\textit{failed ($\|\bm{g}\|\!\approx\!0.8$, 500k ep.)}} & 0.8 \\
FL~\cite{zhang2025fl}                    & 13 & \multicolumn{3}{c}{\textit{diverged (ep.\ 409/200k)}} & -- \\
\textbf{Ext.\ Newton (ours)}            & \textbf{7}  & \textbf{1.42} & \textbf{3.73} & \textbf{5.66} & $\bm{10^{-16}}$ \\
\bottomrule
\end{tabular*}
\end{table}

Table~\ref{tab:main_result} makes the central point: at $\kappa \approx 4{,}712$, every existing method either fails to converge or produces errors exceeding 39\%. AL never drove $\|\bm{g}\|$ below 0.8 after 500k epochs; FL diverged entirely. Only extended Newton achieves single-digit errors (1.42\% slow) while maintaining $\|\bm{g}\| \leq 10^{-16}$.

The failure pattern is instructive. Penalty leaves $\|\bm{g}\| \sim 10^{-3}$, amplified to fast-state errors $\sim\kappa \cdot 10^{-3} \approx 5$. Standard Newton eliminates this but forces the network to learn $O(1/\kappa)$-timescale oscillations (57\% error), as IFT gradients divert optimizer capacity to unlearnable fast states. Extended Newton avoids both: constraints and fast dynamics are solved exactly, leaving only smooth slow dynamics for the network.

\begin{figure*}[t]
    \centering
    \includegraphics[width=\textwidth]{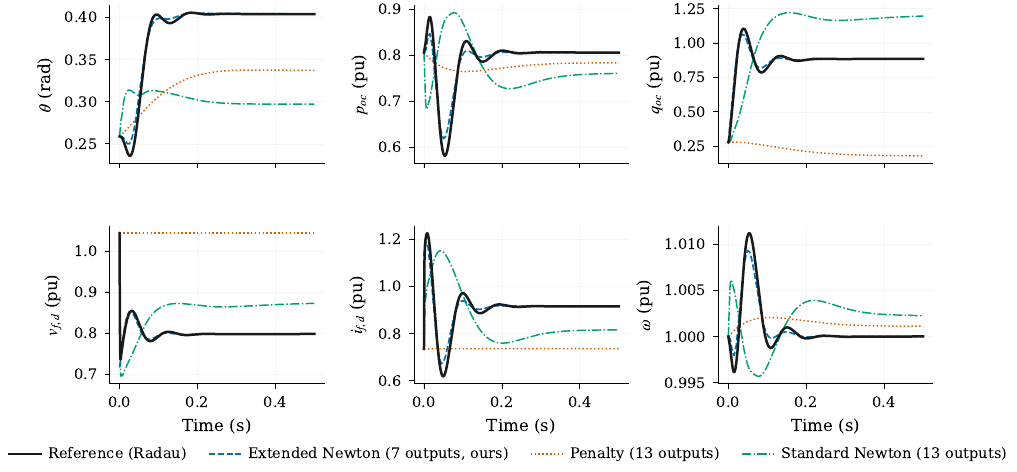}
    \caption{GFM voltage sag ($v_g = 0.6$\,pu): trajectory comparison across 6 states. Extended Newton (dashed) tracks the reference; Penalty (dotted) deviates on $q_\text{oc}$ and $\omega$ due to $\kappa$-amplified $\|\bm{g}\|$; Standard Newton (dash-dot) fails on fast dynamics despite exact $\bm{g}=\bm{0}$. AL failed to converge and FL diverged (omitted).}
    \label{fig:gfm_comparison}
\end{figure*}

\subsection{Exp 3: Two-GFM Cascaded Scalability}
\label{sec:cascaded_exp}

Two GFM inverters ($P_{\text{ref}}=0.806, 0.5$\,pu) connect to the same point of common coupling (PCC) ($r_g=0.01$, $l_g=0.2$\,pu) coupled to an infinite bus ($r_\text{line}=0.02$\,pu), forming a system with $n_x=26$ and $n_z=18$. Each PI-DeepONet is deployed without retraining; coupling is enforced by the cascaded Newton coupler.

\textbf{Cascaded inference.} Each PI-DeepONet predicts its 7 slow states; Algorithm~\ref{alg:cascaded} solves two local $14\times 14$ extended Newton systems in parallel, plus a $2\times 2$ outer Newton for PCC coupling. The outer loop converges in 2--3 steps ($\rho < 1$, Theorem~\ref{thm:convergence}).

\textbf{Voltage sag test.} The infinite-bus voltage drops to 0.8\,pu at $t=0$. Over $T=2$\,s, the cascaded surrogate achieves $0.72\%$ (Inv~1) and $1.16\%$ (Inv~2) slow-state error with $\|\bm{g}\|=0$ at every time point (Table~\ref{tab:cascaded}).

\begin{table}[t]
\centering
\caption{Cascaded $L_2$ error (\%), $v_\infty\!=\!0.8$\,pu, $T\!=\!2$\,s. No retraining; $\|\bm{g}\|=0$.}
\label{tab:cascaded}
\small
\begin{tabular*}{\columnwidth}{@{\extracolsep{\fill}}lcc@{}}
\toprule
State & Inv\,1 (\%) & Inv\,2 (\%) \\
\midrule
$\theta$            & 0.98 & 2.96 \\
$p_\text{oc}$       & 0.64 & 1.14 \\
$q_\text{oc}$       & 0.99 & 0.97 \\
$\xi_d$ / $\xi_q$   & 0.78 / 0.68 & 1.24 / 0.78 \\
$\gamma_d$ / $\gamma_q$ & 0.02 / 1.00 & 0.03 / 1.02 \\
\midrule
\textbf{Slow}       & \textbf{0.72} & \textbf{1.16} \\
Fast / Alg          & 1.11 / 2.03 & 2.19 / 3.57 \\
\bottomrule
\end{tabular*}
\end{table}

Fig.~\ref{fig:cascaded_traj} shows predicted vs.\ reference trajectories over the 2\,s horizon. Inv~2 shows larger errors due to higher reactive power sensitivity at $P_\text{ref}=0.5$.

\begin{figure}[t]
    \centering
    \includegraphics[width=\columnwidth]{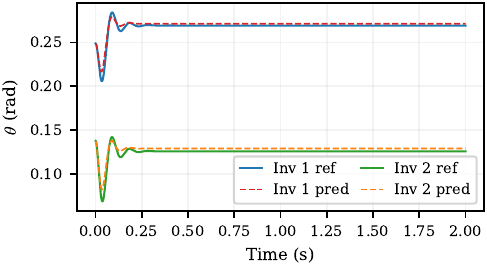}
    \caption{Two-inverter cascaded prediction under voltage sag ($v_\infty=0.8$\,pu, $T=2$\,s). Both PI-DeepONets are deployed without retraining. $\|\bm{g}\|=0$ at all time points.}
    \label{fig:cascaded_traj}
\end{figure}

The key result is zero-shot composition: independently trained models compose without fine-tuning. The 2--3 outer iterations are consistent with $\rho \approx l_f/l_g = 0.08/0.2 = 0.4$ (Theorem~\ref{thm:convergence}).

\subsection{Exp 4: Conformal Prediction as Diagnostic}
\label{sec:cp_exp}

The extended Newton architecture makes two testable predictions: (i)~calibrating slow-state coverage should automatically calibrate fast-state coverage (since fast states are computed, not predicted); and (ii)~out-of-distribution (OOD) inputs should cause simultaneous coverage breakdown across all states, not selective failure. We test both via split conformal prediction (CP)~\cite{shafer2008tutorial}, calibrated on 100 held-out trajectories ($v_g \in [0.5, 1.05]$\,pu) and tested in-distribution (50 trajectories) and OOD ($v_g \in [0.2, 0.5]$\,pu, 50 trajectories), targeting 90\% coverage.

\begin{table}[t]
\centering
\caption{CP coverage (\%, target 90\%). In-dist: $v_g \in [0.5, 1.05]$; OOD: $v_g \in [0.2, 0.5]$.}
\label{tab:cp}
\small
\begin{tabular*}{\columnwidth}{@{\extracolsep{\fill}}lcc@{}}
\toprule
State & In-dist (\%) & OOD (\%) \\
\midrule
$\theta$            & 89.4  & 5.8  \\
$p_\text{oc}$       & 90.0  & 24.8 \\
$q_\text{oc}$       & 91.4  & 5.0  \\
$\xi_d$ / $\xi_q$   & 89.1 / 90.1 & 11.9 / 8.4 \\
$\gamma_d$ / $\gamma_q$ & 91.3 / 90.2 & 11.4 / 0.7 \\
\midrule
\textbf{Average}    & \textbf{90.2} & \textbf{9.7} \\
\bottomrule
\end{tabular*}
\end{table}

Table~\ref{tab:cp} reports per-state coverage. In-distribution, all states achieve 89--91\% (target: 90\%), confirming prediction~(i). OOD, coverage drops to 9.7\% across all states simultaneously, confirming prediction~(ii) and enabling automatic extrapolation detection (Fig.~\ref{fig:cp_bands}). Fast-state errors are consistently $3$--$4\times$ larger than slow-state errors, quantitatively consistent with $C_2 \approx 248$ (Proposition~\ref{prop:qss_error}).

\textbf{Handling non-exchangeability.} Windowed rollouts create temporal dependence, violating standard CP assumptions. We restore exchangeability by treating each trajectory as one data point with score $R_i = \max_t |\hat{x}_s(t) - x_s^\text{true}(t)|$, yielding conservative bands.

\begin{figure*}[t]
    \centering
    \includegraphics[width=\textwidth]{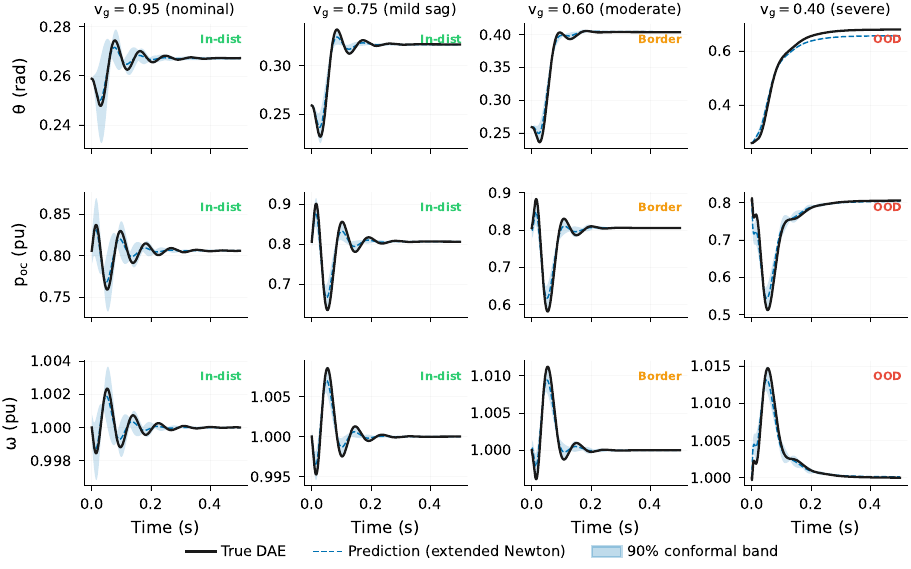}
    \caption{Conformal prediction bands (90\% coverage) across four grid-voltage scenarios. Bands widen from nominal (left) to severe sag (right, OOD), demonstrating automatic extrapolation detection.}
    \label{fig:cp_bands}
\end{figure*}

We derive fast-state CP bands by scaling slow-state bands by an empirical factor ($\approx 5.6$), without independent calibration. The induced coverage of 98.6\% (Table~\ref{tab:cp_induced}) exceeds the directly calibrated 90.6\%, confirming that fast-state uncertainty is fully determined by the Newton solution.

\begin{table}[t]
\centering
\caption{Induced coverage test (in-distribution, target 90\%). ``Direct'': each state group calibrated independently. ``Induced'': fast/algebraic bands derived from slow-state bands scaled by empirical amplification ($\times 5.6$).}
\label{tab:cp_induced}
\small
\begin{tabular*}{\columnwidth}{@{\extracolsep{\fill}}lcc@{}}
\toprule
State group & Direct (\%) & Induced (\%) \\
\midrule
Slow (network) & 90.2 & 90.2 \\
Fast (Newton) & 90.6 & 98.6 \\
Algebraic (Newton) & 90.7 & 93.9 \\
\bottomrule
\end{tabular*}
\end{table}

\subsection{Exp 5: Ablation Studies}
\label{sec:ablation}

Table~\ref{tab:ablation} reports sensitivity to window length and partition specification. Accuracy is stable across $T_w = 1$--$10$\,ms (1.34--1.47\% slow error), but degrades by $3.1\times$ at $T_w = 20$\,ms as more slow-state drift accumulates per window. For partition sensitivity, moving two fast states ($i_\text{cv,d/q}$) into the network's prediction set increases error by $6.9\times$, confirming that the network struggles to learn fast oscillations.

\begin{table}[t]
\centering
\caption{Ablation studies on the voltage sag scenario ($v_g = 0.6$\,pu, $T = 0.5$\,s). Top: window length (standard partition). Bottom: partition (misspecified, $T_w = 5$\,ms).}
\label{tab:ablation}
\small
\begin{tabular*}{\columnwidth}{@{\extracolsep{\fill}}lccc@{}}
\toprule
& Slow (\%) & Fast (\%) & Alg (\%) \\
\midrule
\multicolumn{4}{@{}l}{\textit{Window length sensitivity}} \\
\quad $T_w = 1$\,ms ($T_w/\tau_\text{f} \approx 2$) & 1.39 & 3.73 & 5.67 \\
\quad $T_w = 2$\,ms ($T_w/\tau_\text{f} \approx 4$) & 1.34 & 3.69 & 5.62 \\
\quad $T_w = 5$\,ms ($T_w/\tau_\text{f} \approx 10$) & 1.42 & 3.73 & 5.66 \\
\quad $T_w = 10$\,ms ($T_w/\tau_\text{f} \approx 20$) & 1.47 & 3.72 & 5.59 \\
\quad $T_w = 20$\,ms ($T_w/\tau_\text{f} \approx 40$) & 4.47 & 6.34 & 9.22 \\
\midrule
\multicolumn{4}{@{}l}{\textit{Partition sensitivity ($T_w = 5$\,ms)}} \\
\quad Standard ($n_\text{s}\!=\!7,\, n_\text{f}\!=\!6$) & 1.42 & 3.73 & 5.66 \\
\quad Fast$\to$slow ($n_\text{s}\!=\!9,\, n_\text{f}\!=\!4$) & 9.73 & 13.60 & 25.96 \\
\bottomrule
\end{tabular*}
\end{table}

% ══════════════════════════════════════════
\subsection{Exp 6: Cross-Domain Validation on Robertson Stiff DAE}
\label{sec:robertson}

We apply the framework to System C (Robertson kinetics, defined in Section~\ref{sec:test_systems}):
\begin{equation}
\begin{aligned}
\dot y_1 &= -k_1 y_1 + k_2 y_2 y_3, \\
\dot y_2 &= \phantom{-}k_1 y_1 - k_2 y_2 y_3 - k_3 y_2^2, \\
0 &= y_1 + y_2 + y_3 - 1,
\end{aligned}
\label{eq:robertson}
\end{equation}
with $k_1 = 0.04$, $k_2 = 10^4$, $k_3 = 3\times 10^7$, $\bm{y}(0) = (1, 0, 0)$. Following~\eqref{eq:decomp}, $\bm{x}_\text{s} = (y_1)$, $\bm{x}_\text{f} = (y_2)$, $\bm{z} = (y_3)$. We start at $T_0 = 10^{-2}$\,s with the IC from one Radau evaluation over $[0, T_0]$, after the initial QSS boundary layer (Limitation~(b)).

\textbf{ODE vs.\ DAE formulation.} To isolate the contribution of algebraic constraint enforcement from QSS reduction, we compare two variants of the same architecture and training schedule: the \emph{DAE} formulation predicts $y_1$ alone and solves the combined 2D system $[f_\text{fast};\, g] = 0$ for $(y_2, y_3)$; the \emph{ODE} baseline predicts $(y_1, y_3)$ and solves only $f_\text{fast} = 0$ for $y_2$, treating conservation as an ODE property. A closed-form quadratic warm-start ($y_3 = 1 - y_1 - y_2$ substituted into $f_\text{fast}=0$) yields Newton convergence in 1--2 iterations to $\|F\|_\infty \le 10^{-16}$.

Table~\ref{tab:robertson} and Fig.~\ref{fig:robertson} report the comparison. Slow-state errors are comparable across formulations (both reach the floating-point precision floor of 14-window recursive rollout), but the DAE formulation enforces $\|g\| = 0$ \emph{by arithmetic identity}: $y_3 = 1 - y_1 - y_2$ is one floating-point operation, so the residual is exact, not merely $10^{-14}$. Without this structure, the ODE formulation drifts to $\sim\!10^{-4}$. The DAE variant additionally uses 18\% fewer parameters and trains 30\% faster on identical hardware, with no accuracy penalty. This confirms that the framework generalizes unchanged to a DAE outside the power-system domain. Delegating any boundary-solvable equation to the implicit solver (the design principle of Section~\ref{sec:discussion}) strictly dominates whenever an algebraic invariant is available.

\begin{table}[t]
\centering
\caption{Robertson stiff DAE ($\kappa$ ranges $10^2$--$10^5$, 7-decade time-scale spread, 14 log-time windows): ODE vs.\ DAE formulation. Identical architecture, training schedule, and IC sampling.}
\label{tab:robertson}
\small
\begin{tabular*}{\columnwidth}{@{\extracolsep{\fill}}lcc@{}}
\toprule
& ODE form. & DAE form. \\
\midrule
Network outputs & 2 ($y_1, y_3$) & \textbf{1} ($y_1$) \\
Network parameters & 118{,}852 & 97{,}826 \\
Newton system & 1D ($f_\text{fast}=0$) & 2D ($[f_\text{fast};\, g]=0$) \\
Newton residual $\|F\|_\infty$ & $2.1\times 10^{-16}$ & $9.0\times 10^{-17}$ \\
\midrule
$y_1$ rel.\ $L_2$ error  & 0.0036\% & 0.0033\% \\
$y_2$ rel.\ $L_2$ error  & 0.0059\% & 0.0126\% \\
$y_3$ rel.\ $L_2$ error  & 0.0089\% & 0.0044\% \\
\midrule
Conservation $\max\|g\|$ & $1.1 \times 10^{-4}$ & \textbf{0} \\
Best training loss & $4.7\times 10^{-9}$ & $3.1\times 10^{-9}$ \\
Training time (T4 GPU) & 2{,}107\,s & 1{,}467\,s \\
\bottomrule
\end{tabular*}
\end{table}

\begin{figure}[t]
\centering
\includegraphics[width=\columnwidth]{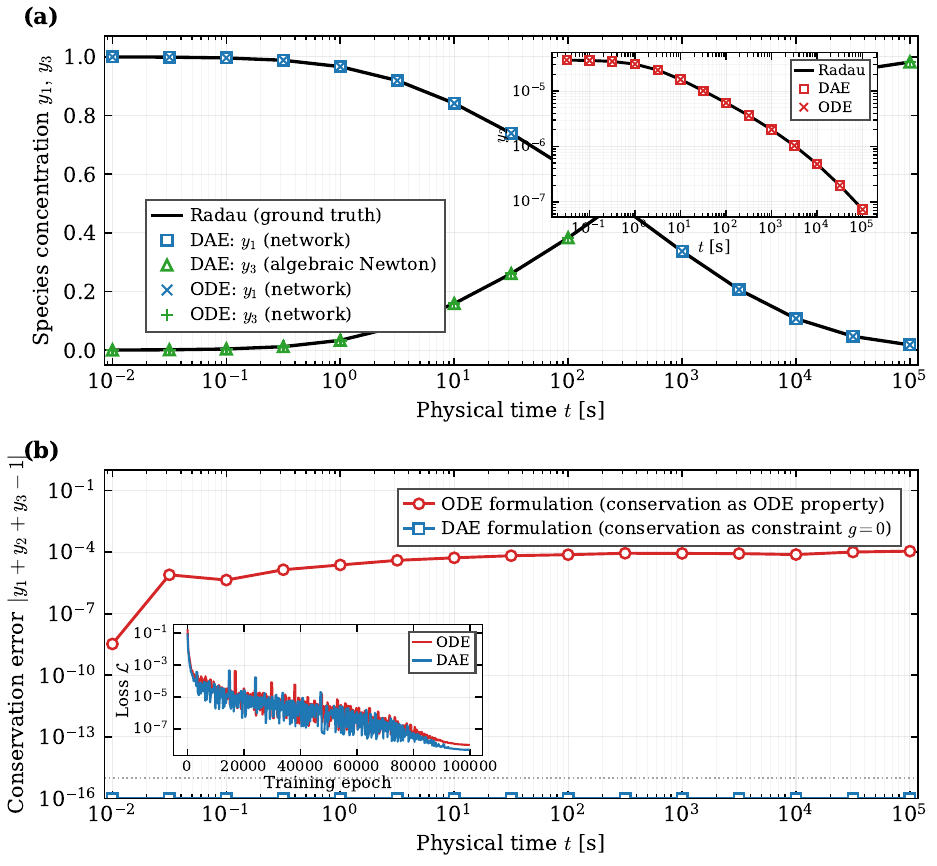}
\caption{Robertson stiff DAE: ODE vs.\ DAE formulation. (a)~Both formulations match the Radau ground truth across seven decades of time. (b)~Conservation error: the DAE formulation rests at machine zero (plotted at $10^{-16}$ for visibility), while the ODE formulation drifts at $\sim\!10^{-4}$. Insets: $y_2$ on log--log scale; training-loss curves under identical hyperparameters.}
\label{fig:robertson}
\end{figure}

% ══════════════════════════════════════════
\section{Discussion}
\label{sec:discussion}

\textbf{Design principle.} Any physical equation solvable at the window boundary (algebraic constraints, QSS conditions, thermodynamic equilibria) should be delegated to the implicit solver, not the network. This principle extends to chemical reaction networks, multi-body mechanics, and building energy systems (fast heating, ventilation, and air conditioning (HVAC) transients vs.\ slow thermal mass).

\textbf{Role of IFT gradients.} The $L_\kappa$-scaled coupling in~\eqref{eq:total_deriv} dominates the training gradient on stiff systems and is structurally absent from penalty methods; extended Newton additionally removes the need to predict fast states, jointly accounting for the $28\times$--$40\times$ improvement on the GFM.

\textbf{QSS validity and robustness.} The method requires $T_w \gg \tau_\text{fast}$ and stable fast eigenvalues. For the GFM, $T_w/\tau_\text{fast} \approx 10$; accuracy is robust across $T_w = 1$--$10$\,ms but degrades at 20\,ms (Table~\ref{tab:ablation}). Zhao et al.~\cite{zhao2025gfm} independently validated this timescale separation.

\textbf{Limitations.} (a)~The partition must be known a priori (eigenvalue clustering can automate this; misspecification costs $6.9\times$, Table~\ref{tab:ablation}). (b)~QSS error is nonzero in the first window after a disturbance, and during initial boundary layers (e.g., $t < 10^{-3}$\,s for Robertson, where $y_2(0)=0$ violates QSS). (c)~Cascaded convergence is validated for $N=2$; larger systems need sparse network Newton. (d)~CP provides marginal, not conditional, coverage.

\textbf{Practical considerations.} Analytical Jacobians avoid automatic differentiation (AD) overhead; AD substitutes at higher cost for black-box models. GPU efficiency requires batched solves and mixed-precision arithmetic; our implementation handles $n_\text{f}+n_z \leq 20$ with negligible overhead. Training cost is 4.7\,h on a single A100 GPU; a 0.5\,s rollout takes $\sim$0.1\,s at inference vs.\ 2--5\,s for Radau~IIA. The Newton-in-the-loop interface supports compositional deployment~\cite{anderson_fouad}, with training cost scaling by the number of component classes, not system size.

% ══════════════════════════════════════════
\section{Conclusion}
% ══════════════════════════════════════════

This paper presented an extended Newton implicit layer that unifies algebraic constraint enforcement and QSS reduction in a single differentiable solve, enabling simulation-free operator learning for stiff DAEs. The network learns only what physics cannot determine exactly: the slow dynamics. On a grid-forming inverter ($\kappa \approx 4{,}712$), this reduces error by $28\times$--$40\times$ over existing methods with $\|\bm{g}\|=0$ guaranteed. Cascaded implicit layers compose independently trained component models without retraining. Cross-domain validation on the Robertson stiff DAE ($\kappa$ up to $10^5$) confirms generalization beyond power systems, with the DAE formulation strictly dominating an ODE formulation in both constraint satisfaction and computational efficiency whenever an algebraic invariant is available.

Several directions remain open. First, the slow/fast partition is currently specified manually; data-driven partitioning via eigenvalue clustering or sensitivity analysis could automate this step. Second, the cascaded scheme assumes weak inter-component coupling ($\rho < 1$); scaling to strongly coupled networks (e.g., bulk power grids with hundreds of inverters) requires sparse-Newton linear algebra at the coupler. Third, the conformal prediction diagnostic provides marginal coverage; adaptive or conditional variants~\cite{gibbs2021adaptive} could tighten the bands.

% ══════════════════════════════════════════
% References
% ══════════════════════════════════════════


\begin{thebibliography}{99}

\bibitem{stiasny2024pinnsim}
J.~Stiasny, B.~Zhang, and S.~Chatzivasileiadis, ``PINNSim: A simulator for power system dynamics based on physics-informed neural networks,'' \textit{Electric Power Systems Research}, vol.~235, 2024.

\bibitem{brenan1996dae}
K.~E.~Brenan, S.~L.~Campbell, and L.~R.~Petzold, \textit{Numerical Solution of Initial-Value Problems in Differential-Algebraic Equations}, Classics in Applied Mathematics, vol.~14. SIAM, 1996.

\bibitem{hairer1996ode2}
E.~Hairer and G.~Wanner, \textit{Solving Ordinary Differential Equations~II: Stiff and Differential-Algebraic Problems}, 2nd ed. Springer, 1996.

\bibitem{robertson1966}
H.~H.~Robertson, ``The solution of a set of reaction rate equations,'' in \textit{Numerical Analysis: An Introduction}, J.~Walsh, Ed. London, U.K.: Academic Press, 1966, pp.~178--182.

\bibitem{moya2023daepinn}
C.~Moya and G.~Lin, ``DAE-PINN: A physics-informed neural network model for simulating differential-algebraic equations with application to power networks,'' \textit{Neural Computing and Applications}, vol.~35, pp.~3789--3804, 2023.

\bibitem{son2023alpinn}
H.~Son, S.~W.~Cho, and H.~J.~Hwang, ``Enhanced physics-informed neural networks with augmented Lagrangian relaxation method (AL-PINNs),'' \textit{Neurocomputing}, vol.~548, p.~126424, 2023.

\bibitem{donti2021dc3}
P.~L.~Donti, D.~Rolnick, and J.~Z.~Kolter, ``DC3: A learning method for optimization with hard constraints,'' in \textit{Proc. ICLR}, 2021.

\bibitem{pal2025pnodes}
A.~Pal, A.~Edelman, and C.~Rackauckas, ``Semi-explicit neural DAEs: Learning long-horizon dynamical systems with algebraic constraints,'' \textit{arXiv:2505.20515}, 2025.

\bibitem{golder2025daehardnet}
R.~Golder, B.~N.~Roy, and M.~M.~F.~Hasan, ``DAE-HardNet: A physics constrained neural network enforcing differential-algebraic hard constraints,'' \textit{arXiv:2512.05881}, 2025.

\bibitem{wang2021pideepont}
S.~Wang, H.~Wang, and P.~Perdikaris, ``Learning the solution operator of parametric partial differential equations with physics-informed DeepONets,'' \textit{Science Advances}, vol.~7, no.~40, 2021.

\bibitem{spotorno2025phrpinn}
E.~N.~Spotorno, J.~Leal~Filho, and A.~A.~Fr\"{o}hlich, ``Hard-constrained neural networks with physics-embedded architecture for residual dynamics learning and invariant enforcement in cyber-physical systems,'' \textit{arXiv:2511.23307}, 2025.

\bibitem{moya2023deeponetgrid}
C.~Moya, S.~Zhang, G.~Lin, and M.~Yue, ``DeepONet-Grid-UQ: A trustworthy deep operator framework for predicting the power grid's post-fault trajectories,'' \textit{Neurocomputing}, vol.~535, pp.~166--182, 2023.

\bibitem{koch2024neural}
J.~Koch, M.~Shapiro, H.~Sharma, D.~Vrabie, and J.~Drgo\v{n}a, ``Learning neural differential algebraic equations via operator splitting,'' \textit{arXiv:2403.12938}, 2024.

\bibitem{zhang2025fl}
R.~Zhang, A.~Raghunathan, J.~Shamma, and N.~Li, ``Constrained optimization from a control perspective via feedback linearization,'' in \textit{Proc. NeurIPS}, 2025.

\bibitem{trSQP2024}
X.~Cheng and S.~Na, ``Physics-informed neural networks with trust-region sequential quadratic programming,'' \textit{arXiv:2409.10777}, 2024.

\bibitem{white2023sndes}
A.~White, N.~Kilbertus, M.~Gelbrecht, and N.~Boers, ``Stabilized neural differential equations for learning dynamics with explicit constraints,'' in \textit{Proc. NeurIPS}, 2023.

\bibitem{lueg2025simultaneous}
L.~R.~Lueg, V.~Alves, D.~Schicksnus, J.~R.~Kitchin, C.~D.~Laird, and L.~T.~Biegler, ``A simultaneous approach for training neural differential-algebraic systems of equations,'' \textit{arXiv:2504.04665}, 2025.

\bibitem{lu2021deeponet}
L.~Lu, P.~Jin, G.~Pang, Z.~Zhang, and G.~E.~Karniadakis, ``Learning nonlinear operators via DeepONet based on the universal approximation theorem of operators,'' \textit{Nature Machine Intelligence}, vol.~3, pp.~218--229, 2021.

\bibitem{karampinis2025pioperator}
I.~Karampinis, P.~Ellinas, J.~Vorwerk, and S.~Chatzivasileiadis, ``Neural operators for power systems: A physics-informed framework for modeling power system components,'' \textit{arXiv:2511.05216}, 2025.

\bibitem{tikhonov1952}
A.~N.~Tikhonov, ``Systems of differential equations containing small parameters in the derivatives,'' \textit{Matematicheskii Sbornik}, vol.~31, no.~3, pp.~575--586, 1952.

\bibitem{kokotovic1999}
P.~V.~Kokotovi\'{c}, H.~K.~Khalil, and J.~O'Reilly, \textit{Singular Perturbation Methods in Control: Analysis and Design}. SIAM, 1999.

\bibitem{gear2006}
C.~W.~Gear, ``Towards explicit methods for differential algebraic equations,'' \textit{BIT Numerical Mathematics}, vol.~46, pp.~505--514, 2006.

\bibitem{zhao2025gfm}
X.~Zhao, M.~Netto, and J.~Zhao, ``A novel discrete-time state-space model for decentralized dynamic state estimation of grid-forming inverters,'' \textit{IEEE Trans. Power Syst.}, 2025.

\bibitem{ji2021stiffpinn}
W.~Ji, W.~Qiu, Z.~Shi, S.~Pan, and S.~Deng, ``Stiff-PINN: Physics-informed neural network for stiff chemical kinetics,'' \textit{J.\ Phys.\ Chem.\ A}, vol.~125, no.~36, pp.~8098--8106, 2021.

\bibitem{lee2025fsnn}
N.~Lee and R.~Temam, ``Fast-slow neural networks for learning singularly perturbed dynamical systems,'' \textit{J.\ Comput.\ Phys.}, 2025.

\bibitem{caldana2025neuralode}
M.~Caldana, P.~Mossier, and L.~Pareschi, ``Neural ordinary differential equations for model order reduction of stiff systems,'' \textit{Int.\ J.\ Numer.\ Methods Eng.}, vol.~126, no.~13, e70060, 2025.

\bibitem{angelopoulos2023conformal}
A.~N.~Angelopoulos and S.~Bates, ``Conformal prediction: A gentle introduction,'' \textit{Foundations and Trends in Machine Learning}, vol.~16, no.~4, pp.~494--591, 2023.

\bibitem{shafer2008tutorial}
G.~Shafer and V.~Vovk, ``A tutorial on conformal prediction,'' \textit{Journal of Machine Learning Research}, vol.~9, no.~3, 2008.

\bibitem{romano2019conformalized}
Y.~Romano, E.~Patterson, and E.~Cand\`es, ``Conformalized quantile regression,'' \textit{Advances in Neural Information Processing Systems}, vol.~32, 2019.

\bibitem{gibbs2021adaptive}
I.~Gibbs and E.~Cand\`es, ``Adaptive conformal inference under distribution shift,'' \textit{Advances in Neural Information Processing Systems}, vol.~34, pp.~1660--1672, 2021.

\bibitem{stankeviciute2021conformal}
K.~Stankeviciute, A.~M.~Alaa, and M.~van~der~Schaar, ``Conformal time-series forecasting,'' \textit{Advances in Neural Information Processing Systems}, vol.~34, pp.~6216--6228, 2021.

\bibitem{anderson_fouad}
P.~M.~Anderson and A.~A.~Fouad, \textit{Power Systems Control and Stability}, 2nd ed. Wiley-IEEE Press, 2003.

\end{thebibliography}
\end{document}